\documentclass[runningheads]{llncs} 

\usepackage{amssymb}
\setcounter{tocdepth}{3}
\usepackage{graphicx}
\usepackage{cancel}

\usepackage{url}


\usepackage{pgf} 

\usepackage{tikz}
\usetikzlibrary{automata,arrows,positioning,snakes,shapes,backgrounds,fit,decorations,circuits.ee.IEC}


\newcommand{\ddist}{4.5pt}

\tikzset{every picture/.style={inner sep=1.5pt}}
\tikzstyle{t}=[isosceles triangle,draw,anchor=east,inner sep=0pt,rotate=90]
\tikzstyle{w}=[draw,fill,circle]
\tikzstyle{d}=[draw,circle,minimum size=2*\ddist,after node path={node[circle,fill] at (\tikzlastnode)  {}}]
\tikzstyle{wl}=[draw,fill,circle,after node path={edge [out=145,in=-145,loop] ()}]
\tikzstyle{ws}=[draw,circle]


\usepackage{comment}
\usepackage{amsmath}

\newcommand{\A}{ {\mathcal A} }

\newcommand{\X}{ {\mathcal X} }
\newcommand{\D}{ {\mathsf D} }
\newcommand{\stream}{ {\mathcal E} }

\DeclareMathOperator{\Set}{set}

\newcommand{\Nat}{ {\mathbb{N}} }

\newcommand{\m}{\mathcal{M}}

\sloppy

\spnewtheorem*{proofsketch}{Proof sketch}{\itshape}{\rmfamily}
\if@envcntsame 
   \def\spn@wtheorem#1#2#3#4{\@spothm{#1}[theorem]{#2}{#3}{#4}}
\else 
   \if@envcntsect 
      \def\spn@wtheorem#1#2#3#4{\@spxnthm{#1}{#2}[section]{#3}{#4}}
   \else 
      \if@envcntreset
         \def\spn@wtheorem#1#2#3#4{\@spynthm{#1}{#2}{#3}{#4}
                                   \@addtoreset{#1}{section}}
      \else
         \def\spn@wtheorem#1#2#3#4{\@spynthm{#1}{#2}{#3}{#4}
                                   \@addtoreset{#1}{chapter}}%
      \fi
   \fi
\fi

\newcommand{\Obs}{{\mathcal{O}}}

\def\epislang{\mathcal{L}_{epis}}
\def\proplang{\mathcal{L}_{prop}}
\def\M{\mathcal{M}}

\newcommand{\tb}[1]{{\bfseries\color{cyan}[Thomas: #1]}}
\newcommand{\nina}[1]{{\bfseries\color{magenta}[Nina: #1]}}


\newcommand{\journal}[1]{}

\newcommand{\full}[1]{#1}
\newcommand{\short}[1]{}

\def\Dom{dom}
\newcommand{\bisim}{{\raisebox{.3ex}[0mm][0mm]{\ensuremath{\medspace \underline{\! \leftrightarrow\!}\medspace}}}}
\def\actionspace{\mathsf{Actions}}
\def\actionnames{A}
\def\a{\mathcal{A}}
\def\actlib{l}
\def\maybeneg{\raisebox{1pt}{\scalebox{0.6}{$($}}\neg\raisebox{1pt}{\scalebox{0.6}{$)$}}}

\sloppy

\begin{document}

\mainmatter 

\title{Learning Actions Models: Qualitative Approach}
\author{Thomas Bolander \and Nina Gierasimczuk}
\institute{}
\maketitle

\begin{abstract}
In dynamic epistemic logic, actions are described using action models. In this paper we introduce a framework for studying learnability of action models from observations. We present first results concerning propositional action models. First we check two basic learnability criteria: finite identifiability (conclusively inferring the appropriate action model in finite time) and identifiability in the limit (inconclusive convergence to the right action model). We show that deterministic actions are finitely identifiable, while non-deterministic actions require more learning power---they are identifiable in the limit. We then move on to a particular learning method, which proceeds via restriction of a space of events within a learning-specific action model. This way of learning closely resembles the well-known update method from dynamic epistemic logic. We introduce several different learning methods suited for finite identifiability of particular types of deterministic actions. 
\end{abstract}
\journal{\tb{Something to be introduced fairly early is clarifying our priorities in this paper. Our priorities in decreasing order is that we wish to say things about: 1) finite identifiability/identifiability in the limit/non-identifiability; 2) time complexities; 3) space complexities; 4) minimal action descriptions. We will not have a lot to say about 3 and 4 in this paper, in fact we have prioritized to keep the learning algorithms as simple as possible, even though this implies that some of them have much worse space complexities than more efficient algorithms solving the same problem (we start out with an initial hypothesis containing anything conceivable and then we filter it down). I think it is important to get this message across, so that readers/reviewers don't start complaining. In many cases, there are clearly more space efficient algorithms, but they are harder to describe and verify. In applications, it is clear that space complexity is going to be a real issue to be dealt with.}}

Dynamic epistemic logic (DEL) allows analyzing knowledge change in a systematic way. The static component of a situation is represented by an epistemic model, while the structure of the dynamic component is encoded in an \emph{action model}. An action model can be applied to the epistemic model via so-called \emph{product update} operation, resulting in a new up-to-date epistemic model of the situation after the action has been executed. 
\journal{\tb{I'm not sure I understand the purpose of the rest of this paragraph.} \nina{It to the explain the role of logic in all this, with respect to pre and post conditions. Maybe we can do it later, but let's remove it.}} 
A language, interpreted on epistemic models, allows expressing conditions under which an action takes effect (so-called preconditions), and the effects of such actions (so-called postconditions). This setting is particularly useful for modeling the process of epistemic planning (see~\cite{bola.ea:epis,ande.ea:cond}): one can ask which sequence of actions should be executed in order for a given epistemic formula to hold in the epistemic model after the actions are executed. 

The purpose of this paper is to investigate possible learning mechanisms involved in discovering the `internal structure' of actions on the basis of their executions. In other words, we are concerned with qualitative learning of action models on the basis of observations of pairs of the form (initial state, resulting state). We analyze learnability of action models in the context of two learning conditions: finite identifiability  (conclusively inferring the appropriate action model in finite time) and identifiability in the limit (inconclusive convergence to the right action model). The paper draws on the results from formal learning theory applied to DEL (see \cite{Gie10,Gierasimczuk:2012aa,Gierasimczuk:2013aa}). 

Learning of action models is highly relevant in the context of epistemic planning. A planning agent might not initially know the effects of her actions, so she will initially not be able to plan to achieve any goals. However, if she can learn the relevant action models through observing the effect of the actions (either by executing the actions herself, or by observing other agents), she will eventually learn how to plan. Our ultimate goal is to integrate learning of actions into (epistemic) planning agents. In this paper, we seek to lay the foundations for this goal by studying learnability of action models from streams of observations. 

\journal{We will analyze learning methods in terms of time and space complexities.\tb{I don't think we should analyse in terms of space complexities, since it will make our approach look very naive (we clearly use much more space than necessary).} \nina{Yes, we should unless we analyze also other possible algorithms. And then compare.} We will also touch upon the issue of minimal action descriptions. \tb{Do we?} \nina{In the and we do not :).}We will keep the proposed learning algorithms as simple as possible, even though this implies that some of them have much worse space complexities than more efficient algorithms solving the same problem.}

The structure of the paper is as follows. In Section \ref{sec:actiontypes} we recall the basic concepts and notation concerning action models and action types in DEL. In Section \ref{sec:learning} we specify our learning framework and provide general learnability results. In Section \ref{sec:update} we study particular learning functions, which proceed via updating action models with new information. \full{Finally, in Section \ref{sec:library} we indicate how to lift our results from the level of individual action learning to that of action library learning.} In the end we briefly discuss related and further work. 

\section{Languages and action types} \label{sec:actiontypes}
Let us first present the basic notions required for the rest of the article (see \cite{balt.ea:logi,ditm.ea:sema} for more details).
Following the conventions of automated planning, we take the set of atomic propositions and the set of actions to be finite. Given a finite set $P$ of atomic propositions, we define the (single-agent) \emph{epistemic language} over $P$, $\epislang(P)$, by the following BNF: $\phi ::=  p ~|~ \neg \phi ~|~ \phi \land \phi ~|~ K\phi,$
where $p \in P$. The language $\proplang(P)$ is the propositional sublanguage without the $K\phi$ clause. When $P$ is clear from the context, we write $\epislang$ and $\proplang$ instead of $\epislang(P)$ and $\proplang(P)$, respectively. By means of the standard abbreviations we introduce the additional symbols $\to$, $\vee$, $\leftrightarrow$, $\bot$, and $\top$.

\begin{definition}[Epistemic models and states]
An \emph{epistemic model} over a set of atomic propositions $P$ is $\m = (W,R,V)$, where $W$ is a finite set of \emph{worlds}, $R\subseteq W \times W$ is an equivalence relation, called the \emph{indistinguishability relation}, and $V: P \to \mathcal{P}(W)$  is a \emph{valuation function}. An \emph{epistemic state} is a pointed epistemic model $(\m,w)$ consisting of an epistemic model $\m = (W,R,V)$ and a distinguished world $w \in W$ called the \emph{actual world}. 
\end{definition}
A \emph{propositional state} (or simply \emph{state}) over $P$ is a subset of $P$ (or, equivalently, a propositional valuation $\nu: P \to \{0,1 \}$).
We identify propositional states and singleton epistemic models via the following canonical isomorphism. A propositional state $s \subseteq P$ is isomorphic to the epistemic model $\m = (\{w\},\{(w,w)\},V)$ where $V(p) = \{w\}$ if $p \in s$ and $V(p) = \emptyset$ otherwise. Truth in epistemic states $(\m,w)$ with $\m = (W,R,V)$ (and hence propositional states) is defined as usual and hence omitted.

Dynamic epistemic logic (DEL) introduces the concept of an action model for modelling the changes to states brought about by the execution of actions \cite{balt.ea:logi}. We here use a variant that includes postconditions~\cite{ditm.ea:sema,bola.ea:epis}, which means that actions can have both epistemic effects (changing the beliefs of agents) and ontic effects (changing the factual states of affairs).
\begin{definition}[Action models] \label{defi:actionmodel}
An \emph{action model} over a set of atomic propositions $P$ is $\a = (E,Q,pre,post)$, where
$E$ is a finite set of \emph{events};
$Q \subseteq E \times E$ is an equivalence relation called the \emph{indistinguishability relation};
$pre: E \to \epislang(P)$ assigns to each event a \emph{precondition};
$post: E \to \proplang(P)$ assigns to each event a \emph{postcondition}. Postconditions are conjunctions of literals (atomic propositions and their negations) or $\top$.\footnotemark
\ $\Dom(\a) = E$ denotes the domain of $\a$. The set of all action models over $P$ is denoted $\actionspace(P)$. 
\end{definition}
\footnotetext{We are here using the postcondition conventions from~\cite{bola.ea:epis}, which are slightly non-standard. Any action model with standard postconditions can be turned into one of our type, but it might become exponentially larger in the process~\cite{ditm.ea:sema,bola.ea:epis}.} 
Intuitively, events correspond to the ways in which an action changes the epistemic state, and the indistinguishability relation codes (an agent's) ability to recognize the difference between those different ways. 
In an event $e$, $pre(e)$ specifies what conditions have to be satisfied for it to take effect, and $post(e)$ specifies its outcome.

\begin{example}\label{exam:hospital1}
Consider the action of tossing a coin. It can be represented by the following action model ($h$ means that the coin is facing heads up):
\[
 \a \ =  \quad \hspace{-2mm}
\begin{array}{l}
  \begin{tikzpicture}[>=stealth',every loop/.style={->},minimum size=1.5mm,every node/.style={auto},show background rectangle,background rectangle/.style={fill=black!10,rounded corners}]
    \node[label={below:$e_1\!:\langle \top, h \rangle$},w] (w1) at (0,0) {}; 
    \node[label={below:$e_2\!:\langle \top, \neg h \rangle$},w] (w2) at (3,0) {}; 
\end{tikzpicture} 
\end{array}
\]
We label each event by a pair whose first argument is the event's precondition while the second is its postcondition. 
Hence, formally we have $\a = (E,Q,pre,post)$ with $E = \{e_1,e_2\}$, $Q$ is the identity on $E$, $pre(e_1) = pre(e_2) = \top$, $post(e_1) = h$ and $post(e_2) = \neg h$. 
The action model encodes that tossing the coin will either make $h$ true ($e_1$) or $h$ false ($e_2$).
\journal{
Ann is in a hospital waiting room, the result of her medical exam is delivered. She is anxious. The relevant propositions in this scenario are: $sick$, $anxious$. The epistemic model of her situation is as follows. 
\[
\M  =  \quad \hspace{-3mm}
\begin{array}{l}
  \begin{tikzpicture}[>=stealth',every loop/.style={->},minimum size=1.5mm,every node/.style={auto},show background rectangle,background rectangle/.style={fill=black!10,rounded corners}]
    \node[label={below:$w_1\!: sick, anxious$},w] (w1) at (0,0) {}; 
    \node[label={below:$w_2\!:anxious$},w] (w2) at (4,0) {}; 
   \path[draw] (w1) -- (w2);
\end{tikzpicture} 
\end{array}
\]
This model encodes that she is anxious, but does not know whether she is sick or not.
\emph{Ann opens the envelope and reads the result}. This action can be represented by an action model in the following way: $E=\{e_1, e_2\}$; $Q=\{(e_1, e_1), (e_2,e_2)\}$, with $pre(e_1)= sick$, $pre(e_2)=\neg sick$, $post(e_1)=\{anxious:=\top\}$, $post(e_2)=\{anxious:=\bot\}$.\footnote{$post(e)$ is most often represented as a set  $\{p_1 := \phi_1, \dots, p_n := \phi_n \}$, which is short for: $post(e)(p_i) = \phi_i$ for all $i$ and $post(e)(p) = p$ for all $p \neq p_1,\dots,p_n$.} We illustrate this action model as follows:
\[
\A  =  \quad \hspace{-3mm}
\begin{array}{l}
  \begin{tikzpicture}[>=stealth',every loop/.style={->},minimum size=1.5mm,every node/.style={auto},show background rectangle,background rectangle/.style={fill=black!10,rounded corners}]
    \node[label={below:$e_1\!:\langle sick, anxious:=\top \rangle$},w] (w1) at (0,0) {}; 
    \node[label={below:$e_2\!:\langle \neg sick, anxious:=\bot \rangle$},w] (w2) at (4,0) {}; 
\end{tikzpicture} 
\end{array}
\]
The action model $\A$ consist of two events, $e_1$ and $e_2$, each labelled by a pair whose first argument is the event's precondition while the second is its postcondition. }
\end{example}

\begin{definition}[Product update]
Let $\m = (W,R,V)$ and $\a = (E,Q,pre,post)$ be an epistemic model and action model (over a set of atomic propositions $P$), respectively. The \emph{product update} of $\m$ with $\a$ is the epistemic model $\m \otimes \a = (W',R',V')$, where
 $W' = \{ (w,e) \in W \times E ~|~ (\m, w) \models pre(e) \}$;
 $R' = \{ ((w,e),(v,f)) \in W' \times W' ~|~ wRv \text{ and } eQf \}$;
$V'(p) = \{(w,e) \in W' ~|~ post(e) \models p \text{ or }( (\m,w) \models p \text{ and } post(e) \not\models \neg p )  \}$.
  For $e \in \Dom(\a)$, we define $\m \otimes e = \m \otimes (\a \upharpoonright \{ e \})$.
\end{definition}
The product update $\m \otimes \a$ represents the result of executing the action $\a$ in the state(s) represented by $\m$.
\begin{example}\label{exam:hospital2}
Continuing Example~\ref{exam:hospital1}, consider a situation of an agent seeing a coin lying heads-up, i.e., the singleton epistemic state $\M = (\{ w\} , \{w,w \} ,V)$ with $V(h) = \{ w \}$. Let us now calculate the result of executing the coin toss in this model. 
\[
  \M \otimes \a =  \quad \hspace{-3mm}
\begin{array}{l}
  \begin{tikzpicture}[>=stealth',every loop/.style={->},minimum size=1.5mm,every node/.style={auto},show background rectangle,background rectangle/.style={fill=black!10,rounded corners}]
    \node[label={below:$(w_1,e_1)\!:h$},w] (w1) at (0,0) {}; 
    \node[label={below:$(w_1,e_2)\!:\text{ }$},w] (w2) at (3,0) {}; 
\end{tikzpicture} 
\end{array}
\]
Here each world is labelled by the propositions being true at the world.
\journal{Ann opening the letter (action $\A$) in her ``belief state'' $\M$:
\[
  \M \otimes \a =  \quad \hspace{-3mm}
\begin{array}{l}
  \begin{tikzpicture}[>=stealth',every loop/.style={->},minimum size=1.5mm,every node/.style={auto},show background rectangle,background rectangle/.style={fill=black!10,rounded corners}]
    \node[label={below:$(w_1,e_1)\!:sick,anxious$},w] (w1) at (0,0) {}; 
    \node[label={below:$(w_2,e_2)\!:$},w] (w2) at (4,0) {}; 
\end{tikzpicture} 
\end{array}
\]
This model encodes that after she has opened the letter, she will know whether she is sick or not (the indistinguishability edge is gone), and be anxious if and only if sick. }
\end{example}
We say that two action models $\a_1$ and $\a_2$ are \emph{equivalent}, written $\a_1 \equiv \a_2$, if for any epistemic model $\m$, $\m \otimes \a_1 \bisim \m \otimes \a_2$, where $\bisim$ denotes standard bisimulation on epistemic models~\cite{sietsma2013action}. 

\full{\subsection{Action types}
We can identify a number of different action types. }
\begin{definition}[Action types] \label{defi:actiontypes}
An action model $\a = (E,Q,pre,post)$ is: \begin{itemize}
  \item \emph{atomic} if $| E | = 1$. 
   \item \emph{deterministic} if all preconditions are mutually inconsistent, that is, $\models pre(e) \land pre(f) \to \bot$ for all distinct $e,f \in E$. 
   \item \emph{fully observable} if $Q$ is the identity relation on $E$. Otherwise it is \emph{partially observable}.
   \item \emph{precondition-free} if $pre(e) = \top$ for all $e \in E$.
   \item \emph{propositional} if $pre(e) \in \proplang$ for all $e \in E$. 
   \item \emph{universally applicable} if $\models \bigvee_{e \in E} pre(e)$. 
   \item \emph{normal} if for all propositional literals $l$ and all $e \in E$, $pre(e) \models l$  implies $post(e) \not\models l$.
   \item with \emph{basic preconditions} if all $pre(e)$ are conjunctions of literals (propositional atoms and their negations). 
  \item with \emph{maximal preconditions} if all $pre(e)$ are maximally consistent conjunctions of literals (i.e., preconditions are conjunctions of literals in which each atomic proposition $p$ occurs exactly once, either as $p$ or as $\neg p$).
\end{itemize}
\end{definition}
Some of the notions defined above are known from existing literature~\cite{bola.ea:epis,ditm.ea:sema,sadzik2006exploring}. The newly introduced notions are precondition-free, universally applicable, and normal actions, as well as actions with basic preconditions. Note that action types interact with each other, atomic actions are automatically both deterministic and fully observable, and precondition-free actions can only be deterministic if atomic.\footnote{The actions considered in propositional STRIPS planning (called \emph{set-theoretic planning} in \cite{ghal.ea:auto}) correspond to epistemic actions that are atomic and have basic post-conditions.}

In the remainder of this section we set a uniform representation of action models that we will later on use in learning methods. We also specify and justify the restrictions we impose on action models.
\full{

\subsubsection{Propositionality}} In this paper we are concerned with product updates of \emph{propositional} states with \emph{propositional} action models. Let $s$ denote a propositional state over $P$, and let $\a = (E,Q,pre,post)$ be any propositional action model. Using the definition above and the canonical isomorphism between propositional states and singleton epistemic states, we get that $s \otimes \a$ is isomorphic to the epistemic model $(W',R',V')$, where
 $W' = \{ e \in E ~|~ s \models pre(e) \}$,
 $R' = \{ (e,f) \in W' \times W' ~|~ eQf \}$,
 $V'(p) = \{e \in W' ~|~  post(e) \models p \text{ or (} s \models p \text{ and } post(e) \not\models \neg p \text{)} \}$.
If $\a$ is fully observable, then the indistinguishability of $s \otimes \a$ is the identity relation. This means that we can think of $s \otimes \a$ as a set of propositional states (via the canonical isomorphism between singleton epistemic models and propositional states). In this case we write $s' \in s \otimes \a$ to mean that $s'$ is one of the propositional states in $s \otimes \a$.  When $\a$ is atomic we have $s \otimes a = s'$ for some propositional state $s'$ (using again the canonical isomorphism). \journal{If $\a$ is \emph{not applicable} in $s$, which we take to mean $s \not \models pre(e)$ for all $e \in \Dom(\a)$, then $s \otimes \a$ is an empty epistemic model and hence $s' \not\in s \otimes \a$ for all propositional states $s'$.}
\begin{example} \label{exam:coin1}
Consider the action model $\a$ of Example~\ref{exam:hospital1} (the coin toss). It is a precondition-free, fully observable, non-deterministic action. Consider an initial propositional state $s = \{ h \}$. Then $s \otimes \a$ is the epistemic model of Example~\ref{exam:hospital2}. It has two worlds, one in which $h$ is true, and another in which $h$ is false. 
So we have $\emptyset, \{ h \} \in s \otimes \a$, i.e., the outcome of tossing the coin is either the propositional state where $h$ is false ($\emptyset$) or the one where $h$ is true ($\{h\}$). 
\end{example}

\full{

\subsubsection{Basic preconditions and normality} 
When preconditions are basic, pre- and postconditions are of the same simple normal form, they are conjunctions of literals. Below we show that any propositional action model can be turned into an action model having this normal form. We also show that we can ensure all action models to be normal.
}

\journal{Propositional actions are sometimes called \emph{factual}~\cite{sadzik2006exploring}, but here we wish to avoid the potential confusion with the use of word \emph{factual} to talk about postconditions (postconditions provide \emph{factual}/\emph{ontic} change).}
\journal{\tb{I'm putting basic preconditions and normality into the same proposition, primarily to save space. In principle, normality doesn't require propositionality, but I think it is a minor thing in this paper, and we need to save as much space as we can. By the way, I made the page size a bit longer to fit even more...}}
\begin{proposition} \label{prop:normalisation}
  Any propositional action model is equivalent to a normal action model with basic preconditions. 
\end{proposition}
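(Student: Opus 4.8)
The plan is to take an arbitrary propositional action model $\a = (E,Q,pre,post)$ and construct an equivalent normal action model $\a'$ with basic preconditions, by splitting each event into several copies, one for each way of making its precondition true via a conjunction of literals, and then patching the postconditions so that normality holds. First I would deal with basic preconditions: for a fixed event $e$, write $pre(e)$ in disjunctive normal form as $\bigvee_{i} \gamma_i$, where each $\gamma_i$ is a consistent conjunction of literals. Replace $e$ by a family of events $e_i$, one for each disjunct $\gamma_i$, with $pre(e_i) = \gamma_i$ and $post(e_i) = post(e)$. Lift the indistinguishability relation $Q$ in the obvious way: $e_i Q' f_j$ iff $e Q f$ (so all copies of $Q$-related events are $Q'$-related, and copies of a single event are mutually $Q'$-related). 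I would then argue that for any propositional state $s$, the worlds of $s \otimes \a'$ are in a bisimulation-respecting correspondence with those of $s \otimes \a$: a world $(s,e)$ survives in $s \otimes \a$ iff $s \models pre(e)$ iff $s \models \gamma_i$ for exactly one $i$ (the $\gamma_i$ are mutually exclusive as a DNF of a propositional formula can be chosen to be, or we simply note at most one can match a given full valuation if we expand to full clauses — but in fact we only need: at least one matches), iff at least one copy $(s,e_i)$ survives in $s \otimes \a'$. Since all surviving copies of $e$ carry the same postcondition, they induce the same valuation, and since they are all $Q'$-equivalent to each other and to exactly the same other surviving worlds, collapsing them is a bisimulation. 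Hence $s \otimes \a \bisim s \otimes \a'$, giving $\a \equiv \a'$.

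Next I would enforce normality while preserving basic preconditions. Normality fails at an event $e$ only if there is a literal $l$ with $pre(e) \models l$ and $post(e) \models l$, i.e., $l$ appears in the conjunction $pre(e)$ and also (the same polarity) in $post(e)$. The key observation is that such a conjunct in $post(e)$ is redundant: if $s \models pre(e)$ then $s \models l$ already, so by the product update clause the truth value of the atom underlying $l$ at $(s,e)$ is the same whether or not $l$ is a conjunct of $post(e)$ — in the $V'$ clause, "$post(e) \models p$" and "$s \models p$ and $post(e) \not\models \neg p$" agree here because $post(e)$ is consistent (so it cannot entail both $l$ and its negation). So define $post'(e)$ by deleting from $post(e)$ every literal already entailed by $pre(e)$ (using $\top$ if everything is deleted), keep $pre$ and $Q$ unchanged, and check that $s \otimes \a$ and $s \otimes \a'$ have literally the same worlds, same indistinguishability, and same valuations — an identity, not merely a bisimulation. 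Performing this after the first construction yields a normal action model with basic preconditions, completing the proof.

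The one point that needs a little care — and the main potential obstacle — is making sure the two constructions compose cleanly and that the bisimulation in the first step is correctly verified, in particular that the indistinguishability relation is handled so that no spurious distinctions or identifications are created when one event is blown up into several $Q'$-equivalent copies. I expect this to be routine: since all copies $e_i$ of $e$ are mutually $Q'$-equivalent and $Q'$-related to exactly the same events as each other, the relation "collapse all surviving copies of each original event to a single world" is a functional bisimulation from $s \otimes \a'$ onto $s \otimes \a$, and bisimilarity is what the definition of $\equiv$ requires. A secondary subtlety is whether, in the DNF step, one must worry about the disjuncts not being mutually exclusive; but this does not affect correctness — extra surviving copies are harmless precisely because they are identified by the bisimulation — so I would not expand to a mutually exclusive form unless it simplifies the write-up.
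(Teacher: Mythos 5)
Your proposal is correct and follows essentially the same route as the paper's own proof sketch: first split each event according to a DNF of its precondition (lifting $Q$ to all copies) to obtain basic preconditions, then delete from each postcondition the literals already guaranteed by the precondition to obtain normality. The extra detail you supply on the bisimulation between $s\otimes\a$ and $s\otimes\a'$ (collapsing the surviving copies of each event) is exactly the verification the paper leaves implicit, so nothing further is needed.
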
 
\full{\begin{proofsketch}
Take a propositional action model  $(E,Q,pre,post)$. We first make the preconditions basic in the following way.
Take any event $e \in E$ with precondition $\phi$. Turn $\phi$ into disjunctive normal form $\bigvee_{i \in I} \bigwedge_{j \in J} p_{ij}$. Then replace $e$ by a set of events $e_i$, $i\in I$, where $post(e_i) = post(e)$ and $pre(e_i) = \bigwedge_{j \in J} p_{ij}$. Each $e_i$ is connected by a $Q$-edge to every event $e$ was originally connected to. This is done for each event $e \in E$. It is easy to see that the resulting action model $(E',Q',pre',post')$ has basic preconditions and is equivalent to the original one. 

We now ``normalise'' $post'$ into a new mapping $post''$ in order to obtain an equivalent normal action model  $(E',Q',pre',post'')$. Note that since the action model $(E',Q',pre',post')$ has basic preconditions, the normality condition can be expressed in a particularly simple way: for all literals $l$ and all $e \in E$, if $l$ is a conjunct of $pre(e)$ then it is not a conjunct of $post(e)$. For each event $e \in E$, we now define $post''(e)$ from $post'(e)$ by deleting each conjunct of $post'(e)$ which is also a conjunct in $pre'(e)$. It is easy to see that this gives an equivalent action model: 
consider an event $e$ and a literal $l$ which is both a conjunct of $pre'(e)$ and $post'(e)$. Since $l$ is a conjunct of $pre'(e)$, $l$ has to be true for $e$ to occur. Since $l$ is a conjunct of $post'(e)$, $l$ will also be true after the event $e$ has occurred. Hence, the event $e$ does not affect the truth value of $l$, and we get an equivalent event by removing $l$ from the postcondition. 
 \journal{We now only have to show that $\a'$ is equivalent to $\a$. Take any epistemic model $\m$. We show that $\m \otimes \a$ is isomorphic to $\m \otimes \a'$. It suffices to prove that if $(w,e)$ is a world of $\m \otimes \a$, then $(w,e)$ has the same valuation in $\m \otimes \a'$. This is trivial for all events $e$ with $post'(e) = post(e)$. So consider...}
 \journal{For any event $e\in E$ and atomic proposition $p \in P$ with $pre(e) \models \maybeneg p$ and $post(e)(p) \models \maybeneg p$, we let $post'(e)(p) = p$. For all other $e,p$, we let $post'(e)(p) = post(e)(p)$. It is easy to see that $(E,Q,pre,post)$ and $(E,Q,pre,post')$ are equivalent. Consider an event $e \in E$ such that $pre(e) \models \maybeneg p$ and $post(e)(p) \models \maybeneg p$. If $pre(e) \models \maybeneg p$, then $\maybeneg p$ has to be true for the event $e$ to be applicable. If $post(e)(p) \models \maybeneg p$, then $\maybeneg p$ will also be true after the event $e$ has occurred. Hence, the event $e$ does not affect the truth value of $p$, and we get an equivalent event by letting $post'(e)(p) = p$.} 
\end{proofsketch}}
\full{
In this paper we are only going to be concerned with propositional actions, and so, due to Proposition \ref{prop:normalisation}, we can restrict attention to normal actions having basic preconditions. }

\journal{
\paragraph{Conjunctive postconditions}

Let $\a = (E,Q,pre,post)$ denote an action model with basic postconditions and let $e$ be an event in $\a$. We define 
\[
post^\ast(e) =  \textstyle\bigwedge_{post(e)(p) = \top} p \wedge \bigwedge_{post(e)(p) = \bot} \neg p.
\] 
$post^\ast(e)$ is a conjunction of literals that gives an alternative representation of the postcondition $post(e)$ of $\a$. This representation of postconditions/effects is well-known in classical planning (so-called ``classical representation'' in \cite{ghal.ea:auto}). 
Whenever appropriate, we will below refer to the postcondition of an event $e$ as $post^\ast(e)$ instead of $post(e)$. Given a $post^\ast(e)$, we can conversely deduce the corresponding $post(e)$: $post(e)(p)= \top$ when $p$ is a conjunct of $post^\ast(e)$, $post(e)(p) = \bot$ when $\neg p$ is a conjunct of $post^\ast(e)$, and $post(e)(p) =p$ otherwise.\footnote{Note that for action models $\a$ with basic pre- and postconditions and postconditions specified using $post^\ast$, the normality condition can be expressed in a particularly simple way: $\a$ is \emph{normal} if for all $p \in P$ and $e \in \Dom(\a)$, if $\maybeneg p$ is a conjunct of $pre(e)$ then it is not a conjunct of $post^\ast(p)$.}
}

\full{
\subsubsection{Universal applicability} }The condition for being universally applicable intuitively means that the action specifies an outcome no matter what state it is applied to. In this paper we will only be concerned with universally applicable action models. \full{To understand the reason for this restriction consider the example of an action \textit{open\_door} with singleton action model $\langle \neg open \wedge \neg locked, open \rangle$, i.e., if the door is currently closed and unlocked, performing \textit{open\_door} will open it. This action model does not specify what happens if an agent attempts \textit{open\_door} when the door is either already open or is locked. We can easily fix this by adding another event to the action model, $\langle open \vee locked, \top \rangle$, expressing that if one tries to open it when locked or already open, nothing happens.  More generally, any action model $\a = (E,Q,pre,post)$ which is not universally applicable can be turned into a universally applicable action model by adding the following event: $\langle \neg \vee_{e\in E} pre(e), \top \rangle$. If an agent is learning results of an action, she should in any possible state be able to attempt executing the action, and hence the action model should specify an outcome of this attempt. For this reason, we require universal applicability.}
\journal{\tb{Is this enough to convince the reader that it is OK to limit attention to universally applicable actions? I didn't start discussing the alternative update operator, obox, since it would require us to first define applicability more carefully, and then it would take up too much space, I think. That might be something for the journal version. }}
\journal{To be discussed. \tb{21 May. Hmmm. I'm still a bit confused about this. In some of the examples it is really odd if we require universal applicability, since it means that action has to specify an outcome also in nonsense states (non-reachable states). It will also screw up things a bit when we get to library learning. On other hand, it seems we loose finite identifiability unless we require universal applicability (cf. mail of May 21), so we better require it. Alternatively we need a notion of applicability in all reachable states, but then everything has to be redefined in terms of an initial state, and that wouldn't be pretty. Something to be discussed...}} 


\section{Learning action models}\label{sec:learning}
\journal{In this paper we distinguish between \emph{reactive} and \emph{proactive learning}. Reactive learning concerns situations when the learning agent is observing (other agents') action executions in various situations, but does not itself proactively choose which actions to execute. We can talk of proactive learning when the learning agent itself chooses which actions to execute, for instance with the purpose of optimizing the speed of learning. Below we will first, in Section \ref{reactive}, consider reactive learning in various settings and on various classes of actions. In Section \ref{proactive} we will consider proactive learning.
\subsection{Reactive learning}\label{reactive}
We will study reactive learning on different levels of generality.}

\full{
First we will focus on learning an individual action, i.e., inferring semantics of a single action name. 
\journal{Then we will move to action library learning, where the meaning of several action names is inferred simultaneously.}The semantics of an action name is an action model.} In the following we will use the expressions \emph{action} and \emph{action model} interchangeably. Below we will first present general results on learnability of various types of action models, and then, in Section~\ref{sect:learning_via_update}, we study particular learning methods and exemplify them.

 \journal{\tb{Planning-based learning (Walsh-type learning) is only relevant in the proactive case.}}


%
%

We are concerned with learning fully observable actions (action models). Partially observable actions are generally not learnable in the strict sense to be defined below. Consider for instance an agent trying to learn an action that controls the truth value of a proposition $p$, but where the agent cannot observe $p$ (events making $p$ true and events making $p$ false are indistinguishable). Then clearly there is no way for that agent to learn exactly how the action works. The case of fully observable actions is much simpler. If initially the agent has no uncertainty, her ``belief state'' can be represented by a propositional state. Executing any sequence of fully observable actions will then again lead to a propositional state. So in the case of fully observable actions, we can assume actions to make transitions between propositional states.

For the rest of this section, except in examples, we fix a set $P$ of atomic propositions.
\journal{\nina{I am now wondering to what extent is it important to say explicitly that we study learning in fixed environments, and that the agent can always identify in the starting state what are the propositions involved in the action.}}

\begin{definition} \label{defi:stream}
A \emph{stream} $\stream$ is an infinite sequence of pairs $(s,s')$ of propositional states over $P$, i.e., $\stream\in (\mathcal{P}(P)\times\mathcal{P}(P))^{\omega}$. The elements $(s,s')$ of $\stream$ are called \emph{observations}. Let $\mathbb{N}:=\mathbb{N}^+\cup\{0\}$, let $\stream$ be a stream over $P$, and let $s,t \in \mathcal{P}(P)$. 
$\stream_n$ stands for the $n$-th observation in $\stream$.
$\stream[n]$ stands for the the initial segment of $\stream$ of length $n$, i.e., $\stream_0,\dots,\stream_{n-1}$. 
$\Set(\stream):=\{(x,y)~|~(x,y)\text{ is an element of } \stream\}$ stands for the set of
 all observations in $\stream$; we similarly define $set(\stream[n])$ for initial segments of streams.
\end{definition}
\begin{definition}
Let $\stream$ be a stream over $P$ and $\a$  a fully observable action model over $P$.
The stream $\stream$  is \emph{sound} with respect to $\a$ if for all $(s,s')\in\Set(\stream)$, $s' \in s \otimes \a$. The stream $\stream$ is \emph{complete} with respect to $\a$ if for all $s \subseteq P$ and all $s' \in s \otimes \a$, $(s, s') \in \Set(\stream)$. In this paper we always assume the streams to be sound and complete. For brevity, if $\stream$ is sound and complete wrt $\a$, we will write: `$\stream$ \emph{is for} $\a$'. 
\end{definition}

\full{\begin{definition}[Learning function] \label{defi:learning_function}}
A \emph{learning function} is a computable $L: (\mathcal{P}(P)\times \mathcal{P}(P))^\ast \to \actionspace(P)\cup\{{\uparrow}\}$. 
\full{\end{definition}}
In other words, a learning function takes a finite sequence of observations (pairs of propositional states) and outputs an action model or a symbol corresponding to `undecided'.

We will study two types of learning: finite identifiability and identifiability in the limit. First let us focus on \emph{finite identifiability}. Intuitively, finite identifiability corresponds to conclusive learning: upon observing some finite amount of action executions the learning function outputs, with certainty, a correct model for the action in question (up to equivalence). This certainty can be expressed in terms of the function being once-defined: it is allowed to output an action model only once, there is no chance of correction later on. Formally, we say that a learning function $L$ is \emph{(at most) once defined} if for any stream $\stream$ for an action over $P$ and $n,k \in \Nat$ such that $n\neq k$, we have that $L(\stream[n]){=}{\uparrow}$ or $L(\stream[k]){=}{\uparrow}$.

\begin{definition}
Let $\X$ be a class of action models and $\A\in \X$, $L$ be a learning function, and $\stream$ be a stream. We say that:
\begin{enumerate}
\item $L$ finitely identifies $\a$ on $\stream$ if $L$ is once-defined and there is an $n\in\Nat$ s.t.\ $L(\stream[n]) \equiv \a$.
\item $L$ finitely identifies $\A$ if $L$ finitely identifies $\A$ on every stream for $\A$.
\item $L$ finitely identifies $\X$ if $L$ finitely identifies every $\A\in\X$.
\item $\X$ is finitely identifiable if there is a function $L$ which finitely identifies $\X$.
 \end{enumerate}
\end{definition}
The following definition and theorem are adapted from \cite{Muk92,LZ92,Gierasimczuk:2012aa}.
\begin{definition} \label{def_dftt}
Let $\X\subseteq\actionspace(P)$. A set $D_\a\subseteq \mathcal{P}(P)\times \mathcal{P}(P)$ is a definite finite tell-tale set $($DFTT$\,)$ for $\A$ in $\X$ if
\begin{enumerate}
\item $D_\a$ is sound for $\a$ (i.e., for all $(s, s')\in D_\a$, $s' \in s\otimes \a$),
\item $D_\a$ is finite, and
\item for any $\a'\in\X$, if $D_\a$ is sound for $\a'$, then $\a\equiv \a'$. 
\end{enumerate}
\end{definition}

\begin{lemma}\label{lemma_dftt}
$\X$ is finitely identifiable iff there is an effective procedure $\D:\X \rightarrow \mathcal{P}(\mathcal{P}(P)\times \mathcal{P}(P))$, given by $\a\mapsto D_\a$, that on input $\A$ produces a definite finite tell-tale of $\A$.
\end{lemma}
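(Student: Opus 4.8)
The plan is to prove the two directions separately, following the classical characterization of finite identifiability via definite finite tell-tales. For the right-to-left direction, suppose such an effective procedure $\D$ exists. I would define a learning function $L$ that, on input $\stream[n]$, searches (in some fixed effective enumeration of $\actionspace(P)$) for an action model $\a$ such that $\D(\a) = D_\a \subseteq \Set(\stream[n])$ and such that $\Set(\stream[n])$ is sound for $\a$; the first such $\a$ found is output, and if no such $\a$ is found (or if $L$ has already produced an output on a shorter initial segment), $L$ outputs ${\uparrow}$. I would need to check three things: (i) $L$ is computable — this uses that $\D$ is effective, that soundness of a finite set of observations for a given $\a$ is decidable, and that the search is bounded because $D_\a$ is finite so it must appear in $\Set(\stream[n])$ for some $n$; (ii) $L$ is once-defined — built into the definition by the clause that suppresses later outputs; (iii) correctness — given a stream $\stream$ for the target $\a^\ast$, completeness of $\stream$ guarantees $D_{\a^\ast} \subseteq \Set(\stream[n])$ for some $n$, so $L$ eventually outputs something; and whatever $\a$ it outputs, $\Set(\stream[n])$ is sound for $\a$, hence $D_\a$ is sound for $\a$; but also $\Set(\stream[n])$, being sound for $\a^\ast$, makes $D_\a$ sound for $\a^\ast$, and then condition 3 of Definition~\ref{def_dftt} forces $\a \equiv \a^\ast$. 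A small subtlety: I must make sure the ``first $\a$ found'' is searched in a way that does not block on an $\a$ whose $D_\a$ never appears; restricting the search at stage $n$ to the first $n$ action models in the enumeration handles this.

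For the left-to-right direction, suppose $L$ finitely identifies $\X$. I would define $\D$ by: given $\a \in \X$, pick any stream $\stream$ for $\a$ (for instance the canonical one enumerating $\Set(\stream) = \{(s,s') : s' \in s \otimes \a\}$ in a fixed order — note this set is finite since $P$ is finite, so $\stream$ is eventually constant and effectively computable from $\a$), run $L$ along $\stream$ until it first produces an output at some stage $n$ (this happens by assumption), and set $D_\a := \Set(\stream[n])$. I would then verify the three DFTT conditions: finiteness is immediate; soundness of $D_\a$ for $\a$ follows since $\stream$ is sound for $\a$; and for condition 3, suppose $\a' \in \X$ with $D_\a$ sound for $\a'$ — I need $\a \equiv \a'$. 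Here the argument is the standard locking-sequence-style observation: extend $\stream[n]$ to a stream $\stream'$ for $\a'$ (possible because $D_\a = \Set(\stream[n])$ is sound for $\a'$, so $\stream[n]$ is a legitimate prefix of some sound and complete stream for $\a'$); then $\stream'$ and $\stream$ share the prefix $\stream[n]$, so $L(\stream'[n]) = L(\stream[n]) \equiv \a$; but $L$ finitely identifies $\a'$ on $\stream'$ and is once-defined, so its unique output, which is $L(\stream'[n])$, must be $\equiv \a'$; hence $\a \equiv \a'$. Effectivity of $\D$ follows from effectivity of $L$ and of the canonical-stream construction.

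The main obstacle I anticipate is the left-to-right direction, specifically the claim that $\stream[n]$ can be extended to a sound and complete stream for $\a'$ whenever $D_\a = \Set(\stream[n])$ is sound for $\a'$. Soundness of the prefix for $\a'$ is exactly what is needed and is given; completeness is then free since we simply append all remaining required observations for $\a'$. The only thing to be careful about is that nothing in $\stream[n]$ contradicts $\a'$ in a way that soundness does not capture — but by definition soundness of every observation is precisely the condition, so this goes through. A secondary point worth stating explicitly is why $L$ producing an output on the canonical stream for $\a$ is guaranteed: this is just the definition of finitely identifying $\a$ applied to that particular stream, which is indeed a stream for $\a$. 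I would also remark that since $\mathcal{P}(P) \times \mathcal{P}(P)$ is finite, all the ``finite'' and ``effective'' bookkeeping is unproblematic, which keeps the proof short.
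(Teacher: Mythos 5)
Your proposal is correct and follows essentially the same route as the paper's proof: for ($\Leftarrow$) a learner that at stage $n$ searches the first $n$ action models in a fixed enumeration for one whose tell-tale is contained in $\Set(\stream[n])$, and for ($\Rightarrow$) running $L$ on a canonical stream for $\a$ and taking $D_\a = \Set(\stream[n])$ at the stage where $L$ first outputs. Your write-up merely adds a harmless (redundant) soundness filter in the learner's search and spells out the locking-argument verification of condition 3 that the paper dismisses as ``easy to check.''
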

\full{
\begin{proof}

[$\Rightarrow$] Assume that $\X$ is finitely identifiable. Then there is a computable function $L$ that finitely identifies $\X$. We use that function to define $\D$. Once the learning function $L$ identifies an action $\a$ it has to give it as a definite output, and this will happen for some $\stream[n]$. We then set $\D(\a)=\Set(\stream[n])$. It is easy to check that such $\D(\a)$ is a definite tell-tale set.
[$\Leftarrow$] Assume that there is an effective procedure $\D:\X \rightarrow \mathcal{P}(\mathcal{P}(P)\times \mathcal{P}(P))$, that on input $\A$ produces a definite finite tell-tale of $\A$. Take an enumeration of $\X$ an take any $\a\in \X$ and any $\stream$ for $\a$. We use $\D$ to define the learning function. At each step $n\in \Nat$, $L$ compares $\stream[n]$ with $\D(\a_1),\ldots, \D(\a_n)$. Once, at some step $\ell\in\Nat$, it finds $\a_k$ such that $\D(\a_k)\subseteq\Set(\stream[\ell])$, it outputs $\a_k$. It is easy to verify that then $\a_k\equiv \a$.
\end{proof}}
\journal{\tb{I guess it would be a bit sad, but can we omit the proof of Lemma 1, and instead refer to the proofs it is adopted from? I guess there shouldn't be any significant differences to the proof in other settings? The problem currently is that the more principled learning theoretic approach we are now taking might end up making the paper longer rather than shorter, because quite a lot of machinery has to be introduced. It might still be worth it, and there is certainly no turning back now, but we should try to keep it as short as possible.}}

In other words, the finite set of observations $D_\a$ is consistent with only one action $\a$ in the class (up to equivalence of actions). $\D$ is a computable function that gives a $D_\a$ for any action $\A$.

\begin{theorem}\label{atomic_fin_id}
For any finite set of propositions $P$ the set of (fully observable) deterministic propositional actions over $P$ is finitely identifiable.
\end{theorem}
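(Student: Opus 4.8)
By Lemma~\ref{lemma_dftt}, it suffices to exhibit an effective procedure $\D$ that, given a deterministic propositional action model $\a$ over $P$, returns a finite set $D_\a \subseteq \mathcal{P}(P)\times\mathcal{P}(P)$ that is sound for $\a$ and sound for no inequivalent deterministic propositional action over $P$. Using Proposition~\ref{prop:normalisation} we may assume $\a$ is normal with basic preconditions, and (by the remarks on universal applicability) that $\a$ is universally applicable. The natural candidate is simply to let $D_\a$ record the behaviour of $\a$ on \emph{all} of $\mathcal{P}(P)$: set $D_\a = \{ (s,s') \mid s \subseteq P,\ s' \in s\otimes\a \}$. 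Since $P$ is finite there are only $2^{|P|}$ states, and since $\a$ is deterministic and universally applicable each $s$ has exactly one successor $s'$, so $D_\a$ is finite (of size $2^{|P|}$); it is computable from $\a$; and it is sound for $\a$ by construction. This handles conditions (1) and (2) of Definition~\ref{def_dftt} immediately.

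The content of the proof is condition (3): if $\a'$ is any deterministic propositional action over $P$ for which $D_\a$ is sound, then $\a \equiv \a'$. First I would argue that since $\a'$ is deterministic, for each $s$ there is at most one $s'$ with $s' \in s\otimes\a'$; and since $D_\a$ is sound for $\a'$ and contains a pair $(s,s')$ for every $s$, that successor exists and must equal the unique $\a$-successor of $s$. Hence $\a$ and $\a'$ induce exactly the same state-transition function on $\mathcal{P}(P)$. The remaining step is to observe that for fully observable \emph{deterministic} actions, the induced transition function on propositional states determines the action model up to $\equiv$: two such actions agreeing as functions $\mathcal{P}(P)\to\mathcal{P}(P)$ satisfy $s\otimes\a \cong s\otimes\a'$ (a singleton model with the same valuation) for every propositional state $s$, and more generally $\m\otimes\a \bisim \m\otimes\a'$ for every epistemic model $\m$ — this is where one uses that preconditions are propositional (so applicability of an event at $(w,e)$ depends only on the valuation at $w$) and that postconditions are conjunctions of literals. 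I would verify the bisimulation claim by taking the relation pairing $(w,e)$ in $\m\otimes\a$ with $(w,f)$ in $\m\otimes\a'$ whenever $w$ has the same valuation in both and $f$ is the event of $\a'$ "responsible" for the same factual and epistemic effect, falling back on the fact that on a singleton the two models coincide worldwise.

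Finally, $\D$ is clearly effective: enumerate $\mathcal{P}(P)$, and for each $s$ compute the unique $s' \in s\otimes\a$ (reading off which event's precondition $s$ satisfies and applying its postcondition), collecting the pairs. The main obstacle I anticipate is the last step of condition (3): making precise that "same transition function on propositional states" upgrades to "$\equiv$" (i.e. agreement on \emph{all} epistemic models, via bisimulation), rather than just agreement on singleton states. One has to be a little careful that two deterministic propositional action models can differ syntactically (different number of events, events with unsatisfiable preconditions, differently-grouped literals) yet still be equivalent; the cleanest route is to note that $\equiv$ only constrains behaviour on reachable worlds, that every world of $\m\otimes\a$ sits over a world $w$ of $\m$ whose valuation is some propositional state $s$, and that the valuation and indistinguishability structure produced there depends only on $(\text{valuation of }w, \text{ which event fires})$ — data that is identical for $\a$ and $\a'$ because they agree on all propositional states. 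With that lemma in hand the theorem follows. (If a fully self-contained argument for that lemma is wanted, it can be isolated as a separate observation about propositional deterministic actions; I would state it explicitly rather than leave it implicit.)
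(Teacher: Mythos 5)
Your proposal is correct and follows essentially the same route as the paper: it invokes Lemma~\ref{lemma_dftt} with exactly the same definite finite tell-tale $D_\a = \{(s,s') \mid s\otimes\a = s'\}$, verifies finiteness and soundness from the finiteness of $P$, and derives condition (3) from determinism forcing $\a$ and $\a'$ to agree as transition functions on $\mathcal{P}(P)$. The only difference is one of detail: the paper compresses the final step into ``hence $\a\equiv\a'$ (since $\a$ and $\a'$ are propositional),'' whereas you correctly identify this as the step needing a bisimulation argument and sketch one.
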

\full{
\begin{proof}
We use Lemma \ref{lemma_dftt}, and hence define $\D$: 
$\D(\A)=\{(s,s')~|~ s\otimes \a=s' \text{, where } s, s'\in \mathcal{P}(P)\}.$
Let us check that indeed $\D(\A)$ is a DFTT for $\A$. We need to show conditions 1, 2 and 3 of Definition~\ref{def_dftt} for $\D(\A)$.
1: $\D(\A)$ is sound for $\a$, trivially. 
2: $\D(\A)$ is finite, because $P$ is finite. 
3: Let us take any propositional action $\a'$ such that $\D(\a)$ is sound for $\a'$. This means, by the definition of $\D$ above and the fact that $\a$ and $\a'$ are deterministic, that for all propositional states $s, s'$ over $P$, if $s \otimes \a = s'$ then $s \otimes \a' = s'$. It follows that $s \otimes \a = s' \otimes \a'$ for all propositional states $s$, and hence $\a \equiv \a'$ (since $\a$ and $\a'$ are propositional).
Finally, $\D$ is computable because $P$ is finite.
\end{proof}
}

\begin{example} \label{exam:coin2}
Theorem~\ref{atomic_fin_id} shows that deterministic actions are finitely identifiable. We will now show that this does not carry over to non-deterministic actions, that is, non-deterministic actions are in general not finitely identifiable.
Consider the action of tossing a coin, given by the action model $\a$ in Example~\ref{exam:hospital1}. If in fact the coin is fake and it will always land tails (so it only consists of the event $e_2$), in no finite amount of tosses the agent can exclude that the coin is fair, and that heads will start appearing in the long run (that $e_1$ will eventually occur). So the agent will never be able to say ``stop'' and declare the action model to only consist of $e_2$. This argument can be generalised, leading to the theorem below.
\end{example}

\begin{theorem}\label{atomic_not_fin_id}
For any finite set of propositions $P$ the set of arbitrary (including non-deterministic) fully observable propositional actions over $P$ is not finitely identifiable.
\end{theorem}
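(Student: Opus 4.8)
The plan is to exhibit a single non-deterministic action model $\a$ together with an infinite family of distinct (up to equivalence) action models, all of which agree with $\a$ on every finite set of observations that $\a$ can generate, so that no once-defined learning function could ever safely commit. Concretely, I would formalise the intuition of Example~\ref{exam:coin2}: fix a proposition $h \in P$ and let $\a$ be the precondition-free action with the single event $\langle \top, \neg h\rangle$ (the ``fake coin'' always landing tails). Then $s \otimes \a = s \setminus \{h\}$ for every state $s$, so the set of observations any sound stream for $\a$ can contain is exactly $T_\a := \{ (s, s\setminus\{h\}) \mid s \subseteq P\}$, which is finite. The key point is that for the genuine coin toss $\a' = \langle\top,h\rangle \cup \langle\top,\neg h\rangle$ of Example~\ref{exam:hospital1}, every observation in $T_\a$ is \emph{also} sound for $\a'$ (indeed $s\setminus\{h\} \in s \otimes \a'$), while $\a \not\equiv \a'$. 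So no finite subset of $T_\a$ can be a DFTT for $\a$ in the class of arbitrary fully observable propositional actions.

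I would then cash this out against the learning-function definition directly rather than only through Lemma~\ref{lemma_dftt}. Suppose for contradiction that $L$ finitely identifies the class $\X$ of all fully observable propositional actions. Take any stream $\stream$ for $\a$; by assumption $L$ is once-defined and there is $n$ with $L(\stream[n]) \equiv \a$. Now consider a stream $\stream'$ for $\a'$ that begins with the same initial segment $\stream[n]$ — this is possible precisely because $\Set(\stream[n]) \subseteq T_\a$ and every element of $T_\a$ is sound for $\a'$, and because any finite sound sequence for $\a'$ can be extended to a complete stream for $\a'$. Then $L(\stream'[n]) = L(\stream[n]) \equiv \a$. Since $L$ is once-defined, $L$ never outputs anything other than ${\uparrow}$ on any other initial segment of $\stream'$, so $L$ fails to output an action equivalent to $\a'$ on $\stream'$ — contradicting that $L$ finitely identifies $\a' \in \X$. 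Hence no such $L$ exists.

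Alternatively, and perhaps more cleanly, I would use Lemma~\ref{lemma_dftt} in its contrapositive form: it suffices to show that $\a$ has no DFTT in $\X$ at all. Any candidate DFTT $D_\a$ must be sound for $\a$, hence $D_\a \subseteq T_\a$; but then $D_\a$ is also sound for $\a'$ (every element of $T_\a$ is), and $\a \not\equiv \a'$, so condition~3 of Definition~\ref{def_dftt} fails. Thus there is no effective (indeed no) procedure producing DFTTs for all of $\X$, and by Lemma~\ref{lemma_dftt} the class $\X$ is not finitely identifiable. One should just note that $|P| \geq 1$ is needed for $h$ to exist; if $P = \emptyset$ the statement is vacuous or trivially handled since there is essentially one action up to equivalence.

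The only real subtlety — the step I expect to need the most care — is the verification that $\a \not\equiv \a'$, i.e.\ that the product-update semantics genuinely distinguishes the two: one must check that there is some state $s$ with $s\otimes\a \not\bisim s\otimes\a'$, which holds because $s \otimes \a'$ has (for suitable $s$, e.g.\ $s = \emptyset$) two non-bisimilar worlds (one satisfying $h$, one not) whereas $s \otimes \a$ has a single world with $h$ false. Everything else — that $T_\a$ is finite, that observations in $T_\a$ are sound for $\a'$, that finite sound sequences extend to complete streams — is routine from the definitions of product update, soundness, and completeness given above.
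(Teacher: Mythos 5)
Your proposal is correct and takes essentially the same route as the paper's own proof: the paper likewise picks two actions, one a restriction of the other, feeds the learner a stream enumerating only the observations sound for the smaller action until once-definedness forces a commitment at some stage $n$, and then extends $\stream[n]$ to a complete stream for the larger action to obtain the contradiction. You simply instantiate the pair concretely with the fake/fair coin of Example~\ref{exam:coin2} and add an equivalent reformulation via the contrapositive of Lemma~\ref{lemma_dftt}, neither of which changes the substance of the argument.
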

\full{
\begin{proof}
Assume that the set of arbitrary propositional actions over $A$ is finitely identifiable. Then there is a learning function $L$ that finitely identifies it. Among such actions we will have two, $\a$ and $\a'$, such that $\a'=\a\upharpoonright \D(\a')$.\footnote{For any action model $\a = (E,Q,pre,post)$ and any subset $E' \subseteq E$ we define $a \upharpoonright E'$ as the restriction of $\a$ to the domain $E'$, that is, $a \upharpoonright E' = (E',Q',pre',post')$  where $Q' = Q \cap(E')^2$, $pre' = pre \upharpoonright E'$ and $post' = post \upharpoonright E'$.} Let us now construct a stream $\stream$ on which $L$ fails to finitely identify one of them. Let the $\stream$ start with enumerating all pairs of propositional states that are sound for the smaller action, $\a'$, and keep repeating this pattern. Since this is a stream for $\a'$ indeed the learning function has to at some point output an equivalent of $\a'$ (otherwise it fails to finitely identify $\a'$, which leads to contradiction). Assume that this happens at some stage $n\in\Nat$. Now, observe that $\stream[n]$ is sound with respect to $\a$ too, so starting at the stage $n+1$ let us make $\stream$ enumerate the rest of remaining pairs of propositional states consistent with $\a$. That means that there is a stream $\stream$ for $\a$ on which $L$ does not finitely identify $\a$. Contradiction.
\end{proof}
}



A weaker condition of learnability, \emph{identifiability in the limit}, allows widening the scope of learnable actions, to cover also the case of arbitrary actions. Identifiability in the limit requires that the learning function after observing some finite amount of action executions outputs a correct model (up to equivalence) for the action in question and then forever keeps to this answer (up to equivalence) in all the outputs to follow. This type of learning can be called `inconclusive', because certainty cannot be achieved in finite time.

\begin{definition}
Let $\X$ be a class of action models and $\A\in \X$, $L$ be a learning function, and $\stream$ be a stream. We say that:
\begin{enumerate}
\item $L$ identifies $\a$ on $\stream$ in the limit if there is $k\in\Nat$ such that for all $n\geq k$, $L(\stream[n]) \equiv \a$.
\item $L$ identifies $\A$ in the limit if $L$ identifies $\A$ in the limit on every $\stream$ for $\A$.
\item $L$ identifies $\X$ in the limit if $L$ identifies in the limit every $\A\in\X$.
\item $\X$ is identifiable in the limit if there is an $L$ which identifies $\X$ in the limit.
 \end{enumerate}
\end{definition}

The following theorem is adapted from \cite{Ang80}.
\begin{theorem}\label{atomic_lim_id}
For any finite set of propositions $P$ the set of (fully observable) propositional actions over $P$ is identifiable in the limit.
\end{theorem}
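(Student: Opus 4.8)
The plan is to use the standard learning-theoretic characterization of identifiability in the limit via \emph{finite tell-tale sets} (Angluin's condition), adapted to the present setting of action models and streams of observations. Recall the classical result: a (uniformly recursively enumerable) class is identifiable in the limit iff each member has a \emph{finite tell-tale} relative to the class, i.e.\ for each $\a \in \X$ there is a finite $T_\a \subseteq \mathcal{P}(P) \times \mathcal{P}(P)$ which is sound for $\a$ and such that no $\a' \in \X$ whose set of sound observations properly contains $T_\a$ and is contained in the full sound set of $\a$ is strictly ``smaller'' — more precisely, such that there is no $\a' \in \X$ with $T_\a \subseteq \mathrm{SoundSet}(\a') \subsetneq \mathrm{SoundSet}(\a)$. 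So the first step is to state this characterization precisely in our vocabulary, referring to \cite{Ang80}.

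The key observation making everything work is that, since $P$ is finite, there are only finitely many propositional states over $P$ (namely $2^{|P|}$), hence only finitely many possible observations $(s,s') \in \mathcal{P}(P) \times \mathcal{P}(P)$, and hence the ``full information'' about a fully observable propositional action $\a$ — namely the relation $R_\a = \{(s,s') : s' \in s \otimes \a\}$ — is itself a finite object. Moreover, by Proposition~\ref{prop:normalisation} we may assume all action models are normal with basic preconditions, and up to equivalence there are only finitely many such action models over $P$ (the relation $R_\a$ together with $\equiv$ determines $\a$ up to equivalence, since for propositional fully observable actions $\a \equiv \a'$ iff $R_\a = R_{\a'}$). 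So $\X$, taken up to equivalence, is \emph{finite}. The second step is to make this precise: show that $\a \mapsto R_\a$ is a computable injection from $\X/{\equiv}$ into the finite set $\mathcal{P}(\mathcal{P}(P) \times \mathcal{P}(P))$.

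The third step is to exhibit the tell-tale sets and the learning function. For each $\a$, simply take $T_\a = R_\a$ itself (the whole sound set): it is finite, it is sound, and it is trivially maximal, so no $\a'$ can have $R_{\a'} \subsetneq R_\a$ while still containing $T_\a = R_\a$; Angluin's condition is vacuously satisfied. Then the learning function is: on input $\stream[n]$, compute $\mathrm{set}(\stream[n])$, and output (some canonical representative of) the first action model $\a$ in a fixed enumeration of $\X$ such that $\mathrm{set}(\stream[n]) \subseteq R_\a$; output ${\uparrow}$ if none exists (which cannot happen for $n\geq 1$ on a stream for some $\a\in\X$). One verifies convergence directly: since the stream is complete for the target $\a$, eventually $\mathrm{set}(\stream[n]) = R_\a$, and from that point on any $\a'$ with $\mathrm{set}(\stream[n]) \subseteq R_{\a'}$ must have $R_\a \subseteq R_{\a'}$; and since the stream is sound for $\a$, every $\a'$ the function ever outputs satisfies $\mathrm{set}(\stream[k]) \subseteq R_{\a'}$, so once $\mathrm{set}(\stream[n]) = R_\a$ the output is pinned to an action model $\a'$ with $R_{\a'} \supseteq R_\a$ — but in fact, since all of $\X$ is considered and the stream is sound, $R_{\a'} \subseteq R_\a$ also holds eventually, forcing $R_{\a'} = R_\a$, i.e.\ $\a' \equiv \a$. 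Actually, the cleanest route is: by finiteness of $\X/{\equiv}$, Angluin's theorem applies immediately once we check the recursive-enumerability and finite-tell-tale conditions, so we can largely cite \cite{Ang80} and just supply these two easy verifications.

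The main obstacle — really the only point needing care — is the interplay between the learning function's outputs, which are \emph{action models} (syntactic objects, infinitely many of them), and the semantic equivalence $\equiv$ under which identification is required. One must be careful that the enumeration of ``$\X$'' used by the learning function is an enumeration of representatives of equivalence classes (or at least that the function's behaviour is well-defined modulo $\equiv$), and that equality of $R_\a$ really does capture $\equiv$ for this class; this is where the restriction to \emph{fully observable} and \emph{propositional} actions is essential, and it should be stated as a small lemma (or recalled from the discussion preceding Definition~\ref{defi:stream}, where it is observed that fully observable propositional actions make transitions between propositional states). Everything else is bookkeeping that mirrors the proof of Lemma~\ref{lemma_dftt} and Theorem~\ref{atomic_fin_id}.
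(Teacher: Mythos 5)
Your overall route is the same as the paper's: both proofs reduce identifiability in the limit to a tell-tale characterization in the style of \cite{Ang80}, and both take the tell-tale of $\a$ to be its \emph{entire} set of sound observations $R_\a = \{(s,s') \mid s' \in s \otimes \a\}$, which is finite precisely because $P$ is finite. (The paper phrases the characterization via a weakened ``FTT'' condition --- if $D_\a$ is sound for $\a'$ then $\a$ is equivalent to a \emph{submodel} of $\a'$ --- whereas you use Angluin's ``no $R_{\a'}$ strictly between $T_\a$ and $R_\a$'' condition, which is vacuous when $T_\a = R_\a$; these amount to the same verification. Your remark that $\a \equiv \a'$ iff $R_\a = R_{\a'}$ for fully observable propositional actions is the point the paper uses implicitly, and making it an explicit lemma is a genuine improvement.)

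However, the explicit learning function you describe is broken, and the argument you give for its convergence contains a false step. Outputting the \emph{first consistent} hypothesis --- the first $\a_k$ in the enumeration with $\Set(\stream[n]) \subseteq R_{\a_k}$ --- does not converge to the target: if some $\a'$ with $R_{\a'} \supsetneq R_{\a^*}$ precedes the target $\a^*$ in the enumeration (e.g.\ the fair-coin model precedes the always-tails model), then $\a'$ remains consistent with \emph{every} initial segment of \emph{every} stream for $\a^*$, so the learner outputs $\a' \not\equiv \a^*$ forever. Your patch --- ``since the stream is sound, $R_{\a'} \subseteq R_{\a^*}$ also holds eventually'' --- is exactly the step that fails: soundness constrains $\Set(\stream[n])$, not the hypotheses consistent with it. The fix is the standard Angluin learner, which also requires the tell-tale to have appeared: output the first $\a_k$ with $T_{\a_k} \subseteq \Set(\stream[n]) \subseteq R_{\a_k}$, which with $T_{\a_k} = R_{\a_k}$ means outputting the first $\a_k$ with $R_{\a_k} = \Set(\stream[n])$. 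Since the stream is sound and complete for $\a^*$ and there are only finitely many possible observations, $\Set(\stream[n]) = R_{\a^*}$ from some point on, and the learner then stabilizes on a model equivalent to $\a^*$. With that repair (or by simply invoking the characterization theorem as the paper does, supplying only the tell-tale procedure), your proof goes through.
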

\full{
\begin{proof}
The argument is similar to the proof of Theorem \ref{atomic_fin_id}. Analogously to the concept of definite finite tell-tale set, we define a weaker notion of finite tell-tale set (FTT).
Let $P$ be a set of propositions and let $\X\subseteq\actionspace(P)$. A set $D_\a\subseteq \mathcal{P}(P)\times \mathcal{P}(P)$ is a finite tell-tale set $($FTT$\,)$ for $\A$ in $\X$ if:
1. $D_\a$ is sound for $\a$ (i.e., for all $(s, s')\in D_\a$, $s' \in s\otimes \a$);
2. $D_\a$ is finite, and
3. for any $\a'\in\X$, if $D_\a$ is sound for $\a'$, then $\a\equiv\a'\upharpoonright X$, where $X\subseteq \Dom(\a')$.

Similarly to the argument for Lemma \ref{lemma_dftt} one can show that
$\X$ is identifiable in the limit iff there is an effective procedure $\D:\X \rightarrow \mathcal{P}(\mathcal{P}(P)\times \mathcal{P}(P))$, given by $\a\mapsto D_\a$, that on input $\A$ enumerates a finite tell-tale of $\A$. We will omit the proof for the sake of brevity.

Now it is enough to show that indeed such a function $\D$ can be given for the set of arbitrary (fully observable, propositional) actions over $P$. Define 
$\D(\A)=\{(s,s')~|~ s' \in  s\otimes \a \text{, where } s, s'\in \mathcal{P}(P)\}$.
Let us check that indeed $\D(\A)$ is a FTT for $\A$. 1: $\D(\A)$ is sound for $\a$, trivially. 
2: $\D(\A)$ is finite, because $P$ is finite.
3: Let us take any propositional action $\a'$ such that $\D(\a)$ is sound for $\a'$. This means, by the definition of $\D$ above that for all propositional states $s, s'$ over $P$, if $s' \in s\otimes \a$ then $s' \in s\otimes \a'$. This implies that $s \otimes \a$ is a submodel of $s \otimes \a'$ for all propositional states $s$, and hence that $\a$ is equivalent to a submodel of $\a'$ (since actions are propositional). 

Finally, again $\D$ is computable because $P$ is finite.
\end{proof}
}

Having established the general facts about finite identifiability and identifiability in the limit of propositional fully-observable actions, we will now turn to studying particular learning methods suited for such learning conditions. 

\section{Learning actions via update} \label{sect:learning_via_update}\label{sec:update}


Standard DEL, and in particular public announcement logic, deals with learning within epistemic models.  If an agent is in a state described by an epistemic model $\m$ and learns from a reliable source, that $\phi$ is true, her state will be updated by eliminating all the worlds where $\phi$ is false. That is, the model $\m$ will be restricted to the worlds where $\phi$ is true. This can also be expressed in terms of action models, where the learning of $\phi$ corresponds to taking the product update of $\m$ with the event model $\langle \phi, \top \rangle$ (public announcement of $\phi$). 

Now we turn to learning \emph{actions} rather than learning \emph{facts}. Actions are represented by action models, so to learn an action means to infer the action model that describes it. Consider again the action model $\a$ of Example~\ref{exam:hospital1}.
The coin toss is non-deterministic and fully observable: either $h$ or $\neg h$ will non-deterministically be made true and the agent is able to distinguish these two outcomes (no edge between $e_1$ and $e_2$). However, we can also think of $\a$ as the hypothesis space of a \emph{deterministic} action, that is, the action $\a$ is in fact deterministically making $h$ true or false, but the agent is currently uncertain about which one it is. Given the prior knowledge that the action in question must be deterministic, learning the action could proceed in a way analogous to that of update in the usual DEL setting. 

It could for instance be that the agent knows that the coin is fake and always lands on the same side, but the agent initially does not know which.
\journal{$\A$ is an action representing the 
push of a light switch at a doorway between two rooms, but the agent does not know whether the switch turns on the light in the current room ($p := \top$), or the other room ($q := \top$).} 
After the agent has executed the action once, she will know. She will observe either $h$ becoming false or $h$ becoming true, and can hence discard either $e_1$ or $e_2$ from her hypothesis space. 
She has now \emph{learned} the correct deterministic action model for tossing the fake coin. 
\journal{Assume she executes action named $a$ in the propositional state $\emptyset$ where neither $p$ nor $q$ is true (light is off in both rooms), and that she afterwards observes that $p$ has become true (the updated model is $\{p \}$). Then she knows that the action $a$ is described by $e_1$ and that $e_2$ can be discarded from her hypothesis space. So she will update her action model by restricting $\a$ to the domain $\{e_1\}$.}
 Note the nice symmetry to learning of facts: here, learning of facts means eliminating worlds in epistemic models, learning of actions means eliminating events in action models.

\emph{In the rest of this section, all action models are silently assumed to be: fully observable, propositional, and universally applicable. Furthermore, we can assume them to be normal and have basic preconditions, due to Proposition~\ref{prop:normalisation}.}

\subsection{Learning precondition-free atomic actions}
We will first propose and study an update learning method especially geared towards learning the simplest possible type of ontic actions: precondition-free atomic actions.

\begin{definition}
For any deterministic action model $\a$ and any pair of propositional states $(s,s')$, the \emph{update} of $\a$ with $(s,s')$ is defined by $\a ~|~ (s,s') := \a \upharpoonright \{ e \in E ~|~ \text{if $pre(e) \models s$ then $s \otimes  e  = s'$} \}$. 
For a set $S$ of pairs of propositional states, we define: 
$\a ~|~ S := \a \upharpoonright \{ e \in E ~|~ \text{for all $(s,s') \in S$, if $pre(e) \models s$ then $s \otimes e = s'$}\}.$	
\end{definition}
The update $\a ~|~ (s,s')$ restricts the action model $\a$ to the events that are consistent with observing $s'$ as the result of executing the action in question in the state $s$.
\journal{\tb{I had to redefine the above. It only worked for precondition-free actions. When we have preconditions it is different. Consider an observation $(s,s')$ and an event $e$ with $s \not \models pre(e)$. We can not use the observation $(s,s')$ to exclude $e$ as a possible event, since its precondition is not even satisfied. Even with my generalisation above, the update mechanism still only works for deterministic actions. Well, it is *defined* for any type of action, but doesn't preserve any interesting properties in the case of non-deterministic actions. So I restricted the def to deterministic actions.}}
\journal{
\tb{I guess we don't need this now?:
\begin{definition}Let $\a$ be an action model and $e_1,e_2 \in \Dom(\a)$. An observation $(s,s')$ is called a \emph{separating observation} for $e_1,e_2$ if $s \otimes e_1 = s'$ and $s \otimes e_2 \neq s'$ (or vice versa). Two events are called \emph{separable}  if there exists a separating observation for them. \tb{So I guess this will become obsolete? But we can keep it until the new proofs are completely in place.}
\end{definition}
Note that if $(s,s')$ is a separating observation for a pair of events $e_1,e_2$ of $\a$ then $\a ~|~ (s,s')$ will only contain one of these events. }
}
\begin{definition} \label{defi:l1}
The \emph{update learning function for precondition-free atomic actions} over $P$ is the learning function $L_1$ defined by $L_1(\stream[n]) = \a_{init}^1 ~|~ \Set(\stream[n])$
where $\a_{init}^1 = (E,Q,pre,post)$ with 
$E = \{ \psi ~|~ \text{$\psi$ is a consistent conjunction of literals over $P$} \}$; 
$Q$ is the identity relation on $E \times E$;
$pre(e) = \top$ for all $e \in E$;
$post(\psi) = \psi$.
\end{definition}
In Figure \ref{rosettas} we show a generic example of such update learning for $P=\{p,q\}$.
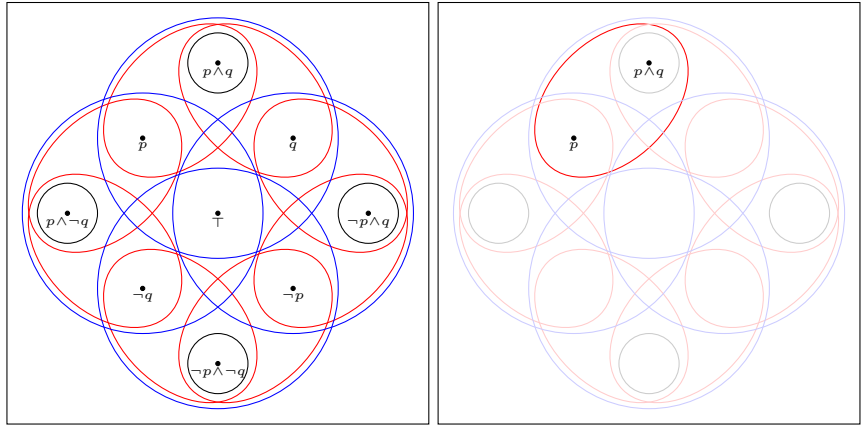
\begin{figure} 
\def\Aset{(0,5) ellipse (1 and 1)}
\def\Bset{(0,-5) ellipse (1 and 1)}
\def\Cset{(5,0) ellipse (1 and 1)}
\def\Dset{(-5,0) ellipse (1 and 1)}

\def\Eset{ellipse (3 and 1.5) } 

\def\Fset{(-2,0) ellipse (3 and 1.5)} 
\def\Gset{(0,5) ellipse (1 and 1)}
\def\Hset{(0,-5) ellipse (1 and 1)}
\def\Iset{(5,0) ellipse (1 and 1)}
\def\Jset{(-5,0) ellipse (1 and 1)}
\def\Kset{(2,2)ellipse (3 and 1.5) } 
\def\Lset{(-2,0) ellipse (3 and 1.5)} 
\tikzstyle{P} = [draw,black]
\tikzstyle{Pfade}=[draw,black!20!white]
\tikzstyle{O} = [draw,blue]
\begin{center}
\begin{tikzpicture}[scale=0.40, framed]
\tikzset{
    >=stealth'}    
\draw[fill=black,color=black] (0,5) node[below] {\tiny{$p{\wedge}q$}} circle (2pt);    
\draw[fill=black,color=black] (-2.5,2.5) node[below] {\tiny{$p$}} circle (2pt);
\draw[fill=black,color=black] (0,0) node[below] {\tiny{$\top$}} circle (2pt);
\draw[fill=black,color=black] (2.5,2.5) node[below] {\tiny{$q$}} circle (2pt);
\draw[fill=black,color=black] (5,0) node[below] {\tiny{$\neg p{\wedge} q$}} circle (2pt);
\draw[fill=black,color=black] (-5,0) node[below] {\tiny{$p{\wedge} \neg q$}} circle (2pt);
\draw[fill=black,color=black] (-2.5,-2.5) node[below] {\tiny{$\neg q$}} circle (2pt);
\draw[fill=black,color=black] (2.5,-2.5) node[below] {\tiny{$\neg p$}} circle (2pt);
\draw[fill=black,color=black] (0,-5) node[below] {\tiny{$\neg p{\wedge} \neg{q}$}} circle (2pt);
       \path[P] \Aset ;
        \path[P] \Bset;
         \path[P] \Cset ;
        \path[P] \Dset;
\draw[rotate around={-45:(-1.25,3.75)},red] (-1.25,3.75)ellipse (2 and 3);       
 \draw[rotate around={45:(-1.25,-3.75)},red] (-1.25,-3.75)ellipse (2 and 3);  
  \draw[rotate around={-45:(1.25,-3.75)},red] (1.25,-3.75)ellipse (2 and 3);  
   \draw[rotate around={45:(1.25,3.75)},red] (1.25,3.75)ellipse (2 and 3);  
    \draw[rotate around={45:(3.75, 1.25)},red] (3.75, 1.25)ellipse (2 and 3);  
     \draw[rotate around={-45:(-3.75, 1.25)},red] (-3.75, 1.25)ellipse (2 and 3);  
      \draw[rotate around={-45:(3.75,-1.25)},red] (3.75,-1.25)ellipse (2 and 3);  
       \draw[rotate around={45:(-3.75, -1.25)},red] (-3.75, -1.25)ellipse (2 and 3);          
       
           \draw[blue] (0, 2.5) ellipse (4 and 4);  
                      \draw[blue] (0, -2.5) ellipse (4 and 4);  
                                 \draw[blue] (2.5, 0) ellipse (4 and 4);  
                                            \draw[blue] (-2.5,0) ellipse (4 and 4);

\end{tikzpicture}
\begin{tikzpicture}[scale=0.40, framed]
\tikzset{
    >=stealth'}    
\draw[fill=black,color=black] (0,5) node[below] {\tiny{$p{\wedge}q$}} circle (2pt);    
\draw[fill=black,color=black] (-2.5,2.5) node[below] {\tiny{$p$}} circle (2pt);
       \path[Pfade] \Aset \Bset \Cset \Dset;
\draw[rotate around={-45:(-1.25,3.75)},red] (-1.25,3.75)ellipse (2 and 3);       
 \draw[rotate around={45:(-1.25,-3.75)},red!20!white] (-1.25,-3.75)ellipse (2 and 3);  
  \draw[rotate around={-45:(1.25,-3.75)},red!20!white] (1.25,-3.75)ellipse (2 and 3);  
   \draw[rotate around={45:(1.25,3.75)},red!20!white] (1.25,3.75)ellipse (2 and 3);  
    \draw[rotate around={45:(3.75, 1.25)},red!20!white] (3.75, 1.25)ellipse (2 and 3);  
     \draw[rotate around={-45:(-3.75, 1.25)},red!20!white] (-3.75, 1.25)ellipse (2 and 3);  
      \draw[rotate around={-45:(3.75,-1.25)},red!20!white] (3.75,-1.25)ellipse (2 and 3);  
       \draw[rotate around={45:(-3.75, -1.25)},red!20!white] (-3.75, -1.25)ellipse (2 and 3);          
       
           \draw[blue!20!white] (0, 2.5) ellipse (4 and 4);  
                      \draw[blue!20!white] (0, -2.5) ellipse (4 and 4);  
                                 \draw[blue!20!white] (2.5, 0) ellipse (4 and 4);  
                                            \draw[blue!20!white] (-2.5,0) ellipse (4 and 4);

\end{tikzpicture}
\caption{On the left hand side $\a^1_{init}$ for $P=\{p,q\}$, together with sets corresponding to possible observations. We have labelled each event $e$ by $post(e)$. On the right hand side the state of learning after observing $\stream_0=(\{q\}, \{p,q\})$.}\label{fig_lex_not_scons}\label{rosettas}
\end{center}
\end{figure}

\begin{theorem} \label{theorem:learningdeterministicpreconditionfree}
The class of precondition-free atomic actions is finitely identifiable by the update learning function $L^{update}_1$, defined in the following way:
\[
  L^{update}_1(\stream[n]) = 
  \begin{cases}
  L_1(\stream[n]) & \text{if } card(\Dom(L_1(\stream[n])))=1\\
  & \text{and for all } k< n, \ L^{update}_1(\stream[k])=\ \uparrow;\\
  \uparrow & otherwise.
  \end{cases}
\]
\end{theorem}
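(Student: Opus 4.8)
The goal is to show that $L^{update}_1$ finitely identifies the class of precondition-free atomic actions. By definition of finite identifiability, I need to establish two things: (1) $L^{update}_1$ is once-defined, and (2) on every stream $\stream$ for a precondition-free atomic action $\a$, there is some $n$ with $L^{update}_1(\stream[n]) \equiv \a$. Condition (1) is immediate from the case distinction in the definition: the clause ``for all $k<n$, $L^{update}_1(\stream[k]) = {\uparrow}$'' ensures that once a non-$\uparrow$ value is output, all later values are $\uparrow$; combined with the symmetry that an earlier output blocks a later one, no two indices can both give non-$\uparrow$ outputs. So the work is all in (2).

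\textbf{Key steps for condition (2).} First I would pin down the normal form: a precondition-free atomic action $\a$ over $P$ has a single event $e$ with $pre(e) = \top$, and by Proposition~\ref{prop:normalisation} (and atomicity) I may assume $post(e)$ is a consistent conjunction of literals $\psi_\a$; moreover $s \otimes \a = (s \setminus \{p : \neg p \text{ in } \psi_\a\}) \cup \{p : p \text{ in } \psi_\a\}$ for every $s$. Hence the event $\psi_\a$ appears in $\a^1_{init}$, and I must show that after finitely many observations $L_1$ has narrowed $\a^1_{init}$ down to exactly $\{\psi_\a\}$ (up to equivalence). The core lemma is: for an event $\psi \in \Dom(\a^1_{init})$ with $\psi \neq \psi_\a$ (as conjunctions, i.e. not denoting the same constant function on states), there is an observation $(s,s')$ sound for $\a$ with $s \otimes \psi \neq s'$, so that $\psi$ is discarded in $\a^1_{init} \mid \{(s,s')\}$. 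Concretely, if $\psi$ and $\psi_\a$ disagree on the sign of some atom $p$, or $\psi_\a$ mentions $p$ but $\psi$ does not (or vice versa), then picking a suitable starting state $s$ and letting $s' = s \otimes \a$ yields an observation separating $\psi$ from $\psi_\a$; since all these events are precondition-free, the ``if $pre(e) \models s$'' guard is vacuously triggered, so the update genuinely removes $\psi$. Because $P$ is finite, only finitely many such separating observations are needed to kill every $\psi \neq \psi_\a$; by completeness of the stream, all of them appear in some initial segment $\stream[n]$. Then $\Dom(L_1(\stream[n])) = \{\psi_\a\}$, so $card(\Dom(L_1(\stream[n]))) = 1$; taking the least such $n$, and noting that for $k < n$ the domain of $L_1(\stream[k])$ still contains $\psi_\a$ together with at least one other event (since the separating observations were not all seen yet — here I need to argue no earlier accidental collapse to a singleton, which follows because $\a^1_{init}$ contains both $\psi_\a$ and, say, $\top$, and $\top$ is only discarded once an observation with $s \otimes \top = s \neq s'$ appears, which is compatible with a careful choice but in general I should argue the least $n$ with the singleton property gives the right answer regardless), $L^{update}_1(\stream[n]) = L_1(\stream[n])$, which has the single event $\psi_\a$ and is therefore equivalent to $\a$.

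\textbf{Anticipated main obstacle.} The delicate point is the interaction between the cardinality-one trigger and correctness: I must rule out that $L_1(\stream[k])$ collapses to a singleton $\{\psi\}$ with $\psi \not\equiv \psi_\a$ at some stage $k$ before all separating observations have arrived, which would make $L^{update}_1$ output a wrong action. But this cannot happen: every observation on the stream is sound for $\a$, hence $s \otimes \psi_\a = s'$ for each $(s,s') \in \Set(\stream[k])$, so $\psi_\a$ is never discarded and always remains in $\Dom(L_1(\stream[k]))$. Therefore the first time the domain becomes a singleton, that singleton must be $\{\psi_\a\}$, and the output is correct. I would state this monotonicity-and-soundness observation as an explicit sublemma, since it is what makes the ``first singleton'' heuristic safe. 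The remaining steps — that some finite initial segment does force the singleton, and that the output is equivalent to $\a$ — are then routine given finiteness of $P$, completeness of the stream, and the semantics of atomic product update.
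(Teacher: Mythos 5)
Your proof is correct, and its overall skeleton matches the paper's: once-definedness is immediate from the case distinction, the true event $\langle\top,\psi_\a\rangle$ is never eliminated because the stream is sound, every other event of $\a_{init}^1$ is eventually eliminated, and completeness puts the eliminating observations into some finite initial segment. Where you diverge is in how you justify the elimination step. The paper does not construct separating observations by hand; it invokes the machinery it has already built, namely Theorem~\ref{atomic_fin_id} and Lemma~\ref{lemma_dftt}, to obtain a definite finite tell-tale $D_\a$, waits until $D_\a\subseteq\Set(\stream[n])$, and then argues that any surviving $e'\neq e$ would make $D_\a$ sound for the singleton model on $e'$, contradicting the DFTT property (using that distinct events of $\a_{init}^1$ are pairwise inequivalent). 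Your route instead proves that pairwise inequivalence directly, by case analysis on how two distinct consistent conjunctions of literals can differ (opposite sign on some atom, or one mentions an atom the other omits) and exhibiting a state that witnesses the difference; this makes the argument self-contained at the cost of redoing in concrete form what the DFTT abstraction packages up. One point in your favour: you isolate as an explicit sublemma the fact that the \emph{first} singleton reached must be $\{\psi_\a\}$ (since $\psi_\a$ survives every sound observation), which is exactly what makes the cardinality-one trigger safe; the paper leaves this implicit, relying on the observation that $e$ belongs to $\a_{init}^1\mid\Set(\stream[n])$ for every $n$. Only a small caution: your parenthetical describing distinct events as ``not denoting the same constant function on states'' is misleading wording --- the events denote state transformers, not constants --- but your subsequent case analysis shows you mean the right thing, namely that syntactically distinct conjunctions of literals induce distinct transformers.
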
  

\full{
\begin{proof}
Let $\a$ denote a precondition-free atomic action over $P$, and let $\stream$ be a stream for $\a$. 
We show that $L^{update}_1$ finitely identifies $\a$ on $\stream$. First note that $L^{update}_1$ is obviously (at most) once-defined. Further, we need to show that
for some $n\in\Nat$, $L^{update}_1(\stream[n]) \equiv \a$. By definition of $L^{update}_1$, it is the case only if $card(\Dom(\a_{init}^1 \mid \Set(\stream[n]))) =1$ and $(\a_{init}^1 \mid \Set(\stream[n])) \equiv \a$. 

Since $\a$ is atomic and precondition-free, it must consist of a single event of the form $\langle \top, \psi \rangle$. By definition of $\a^1_{init}$ (Definition~\ref{defi:l1}), this implies that there is an event $e \in \Dom(\a^1_{init})$ such that $\a = \a^1_{init} \upharpoonright \{ e \}$. Since $\stream$ is a stream for $\a$, $e$ is in $\a_{init}^1 \mid \Set(\stream[n])$.

By Theorem \ref{atomic_fin_id} we know that $\a$ is finitely identifiable, so, by Lemma \ref{lemma_dftt}, there is a DFTT $D_\a$ for $\A$. Since $\stream$ is for $\a$ and $D_\a$ is by definition finite and sound for $\a$, there is $n\in\Nat$ such that $D_{\a}\subseteq \Set(\stream[n])$. By definition of DFTT and of $\A^1_{init}$ we get that for all $e' \in \A^1_{init}$ such that $e'\neq e$ there is $(s,s')$ in $\stream[n]$ such that $s\otimes e'\neq s'$, and hence $e'$ is not in $\a^1_{init} \mid \Set(\stream[n])$. To see why that is assume the contrary, i.e., that there is an $e' \in \A_{init}$, $e' \neq e$ such that  for all $(s,s') \in \stream[n]$ we have $s \otimes e' = s'$. Then $D_\a$ is also sound for the singleton action model containing only $e'$. But this contradicts that $D_\a$ is a DFTT for $\a$ (since all pairs of distinct events in $\A^1_{init}$ are inequivalent).
 
Combining the above we get that $\A^1_{init}~|~ \Set(\stream[n])$ contains exactly one event, $e$, and hence $\a_{init}^1 \mid \Set(\stream[n]) = \a_{init}^1 \upharpoonright \{ e\} = \a$, showing the required.
\end{proof}
}


\subsection{Learning deterministic actions with preconditions}
We now turn to learning of action models with preconditions. First we only treat the case of maximal preconditions, then afterwards we generalise to arbitrary (not necessarily maximal) preconditions.

\journal{
RECONSIDER THIS, CF. REVIEWS.
\begin{lemma} \label{lemma:maximallyconsistent}
  Let $\a$ be an action model over $P$ with basic preconditions. Then there exists an action model $\a'$ equivalent to $\a$ in which all preconditions are maximally consistent conjunctions of literals (i.e.,\ preconditions are conjunctions of literals in which each atomic proposition $p$ occurs exactly once, either as $p$ or as $\neg p$). If $\a$ is deterministic, so is $\a'$.
\end{lemma}
\begin{proof}
  Consider any event $e = \langle \phi, \psi \rangle$ of $\a$ where $p_1,\dots,p_n$ denote the propositions from $P$ that do not occur in $\phi$. Let $\Gamma$ be the set of maximally consistent conjunctions of literals from $\{p_1,\dots,p_n\}$. Then $e$ can be replaced by the set of events $\{ \langle \phi \wedge \gamma, \psi \rangle ~|~ \gamma \in \Gamma \}$, where any other event $e'$ originally connected to $e$ is now connected to all of the new events. These new events all have preconditions that are maximally consistent conjunctions of literals (recall that action models are assumed to have basic preconditions). It is easy to check that the modification gives an action model equivalent to the original. By performing this modification for each event $e$ of $\a$ we reach the required action model $\a'$. If $\a$ is deterministic, it means that all preconditions of $\a$ are mutually inconsistent, and it is clear that all preconditions of $\a'$ will be mutually inconsistent as well (any pair of distinct maximally consistent conjunctions is mutually inconsistent).
\end{proof}
}

\begin{definition}\label{defi:updatelearning2}
The \emph{update learning function for deterministic action models with maximal preconditions} over $P$ is the learning function $L_2$ defined by
  $L_2( \stream[n]) = \a_{init}^2 ~|~ \Set(\stream[n])$
where $\a_{init}^2 = (E,Q,pre,post)$ with 
$E = \{ (\phi, \psi) ~|~$ $\phi$ is a maximally consistent conjunction of literals over $P$ and 
$\psi$ is a conjunction of literals over $P$ not containing any of the conjuncts of $\phi$ $\}$; 
   $Q$ is the identity on $E \times E$;
$pre((\phi,\psi)) = \phi$; 
$post((\phi,\psi)) = \psi$.
\end{definition}
\begin{theorem} \label{theorem:identifiabilitydeterministic}
  The class of deterministic action models with maximal preconditions is finitely identifiable by the following update learning function $L^{update}_2$. 
  \[
  L^{update}_2(\stream[n]) = 
  \begin{cases}
  L_2(\stream[n]) & \text{if for all } e, e'\in \Dom(L_2(\stream[n])) \\
  & \text{if } e\neq e', \text{ then } pre(e)\neq pre(e')\\
  & \text{and for all } k< n, \ L^{update}_2(\stream[k])=\ \uparrow;\\
  \uparrow & otherwise.
  \end{cases}
\]
\end{theorem}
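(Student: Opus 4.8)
The plan is to mirror the structure of the proof of Theorem~\ref{theorem:learningdeterministicpreconditionfree}, using the tell-tale machinery of Lemma~\ref{lemma_dftt} together with the key structural observation that $\a^2_{init}$ contains, for \emph{every} deterministic action model with maximal preconditions, an equivalent submodel. First I would show that $L^{update}_2$ is at most once-defined: this is immediate from the third clause in the case distinction, which fires the output only if no earlier stage has produced one. Next, fix a deterministic action model $\a$ with maximal preconditions and a stream $\stream$ for $\a$; the goal is to produce an $n$ with $L^{update}_2(\stream[n]) \equiv \a$ and with the ``distinct events have distinct preconditions'' side condition satisfied at stage $n$.

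The first substantive step is to identify a canonical submodel of $\a^2_{init}$ equivalent to $\a$. Since $\a$ has maximal preconditions, each event of $\a$ has the form $\langle \phi, \psi \rangle$ with $\phi$ a maximally consistent conjunction of literals; by normality (Proposition~\ref{prop:normalisation}, which we may assume) we can take $\psi$ to contain no conjunct of $\phi$, so each event of $\a$ is (equivalent to) an event of $\a^2_{init}$. Because $\a$ is deterministic, distinct events have mutually inconsistent --- hence, being maximal, \emph{distinct} --- preconditions. So there is a set $E_0 \subseteq \Dom(\a^2_{init})$ with $\a^2_{init}\restriction E_0 \equiv \a$, and all preconditions of events in $E_0$ are pairwise distinct. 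I would then argue that $L_2(\stream[n]) = \a^2_{init} \mid \Set(\stream[n])$ contains exactly the events of $E_0$ for large enough $n$: every $e \in E_0$ survives all updates because $\stream$ is for $\a$ and $\a \equiv \a^2_{init}\restriction E_0$; conversely, using a DFTT $D_\a$ for $\a$ (which exists by Theorem~\ref{theorem:identifiabilitydeterministic}'s predecessor, Theorem~\ref{atomic_fin_id}, applied to the deterministic class, via Lemma~\ref{lemma_dftt}) and choosing $n$ with $D_\a \subseteq \Set(\stream[n])$, any event $e' \notin E_0$ is killed: either $\models pre(e') \to pre(e)$ for some $e \in E_0$ with $e' \ne e$, in which case the observation witnessing $pre(e')$ in $\stream[n]$ excludes $e'$ since $e'$ and $e$ give different outcomes; or $pre(e')$ equals some $pre(e)$, $e\in E_0$, $e'\neq e$, again excluded. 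The cleanest route is: if $e' = \langle \phi, \psi' \rangle$ is not in the (unique up to equivalence) normal representation $E_0$ of $\a$, then either $\phi$ is not a precondition of any event of $\a$ --- but then, since $\a$ has maximal preconditions and is universally applicable, this contradicts universal applicability unless... (here one uses that $\a^2_{init} \mid \Set(\stream[n])$ is still universally applicable and that the state $s$ with $s \models \phi$ appears in the complete stream) --- or $\phi$ is the precondition of exactly one event $e = \langle \phi,\psi\rangle$ of $\a$ with $\psi \ne \psi'$, and then the observation $(\{p : p \in \phi\}, (\{p:p\in\phi\})\otimes e)$ in $\Set(\stream[n])$ excludes $e'$.

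Once $\Dom(L_2(\stream[n])) = E_0$ for all sufficiently large $n$, the side condition of $L^{update}_2$ (distinct events have distinct preconditions) holds at every such $n$, since it holds for $E_0$; picking the least such $n$ that is also $\geq$ the least stage where no output has yet been made, we get $L^{update}_2(\stream[n]) = L_2(\stream[n]) = \a^2_{init}\restriction E_0 \equiv \a$, which is the required. The main obstacle, and the step deserving the most care, is the argument that every ``spurious'' event $e'$ of $\a^2_{init}$ not in $E_0$ is eventually excluded by the update --- in particular handling events whose (maximal) precondition $\phi$ is one that $\a$ \emph{never} fires on. Here the delicate point is that such a $\phi$ cannot exist for a \emph{universally applicable} $\a$ with maximal preconditions except as the precondition of exactly one event, so $e'$ must differ from the corresponding $\a$-event in its postcondition, and then completeness of $\stream$ guarantees the separating observation appears; I would make sure the universal-applicability and maximal-precondition hypotheses are used explicitly at exactly this point, and note that the side condition in the theorem statement is precisely what guarantees the once-defined output is not triggered prematurely on a stage where $L_2$ still retains two events sharing a precondition (which can happen transiently before enough observations have come in).
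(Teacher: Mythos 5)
Your proposal is correct and follows essentially the same route as the paper's own proof: embed $\a$ as a submodel $E_0$ of $\a^2_{init}$ via maximality of preconditions plus normality, use completeness of the stream (equivalently, inclusion of a DFTT, via Theorem~\ref{atomic_fin_id} and Lemma~\ref{lemma_dftt}) to eliminate every spurious event sharing a maximal precondition with the unique $E_0$-event for the corresponding state, and note that the ``distinct events have distinct preconditions'' test fires exactly when $\Dom(L_2(\stream[n]))$ has shrunk to $E_0$. You are in fact somewhat more explicit than the paper about where universal applicability and maximality are used to kill spurious events; the one point to tighten is the claim that the \emph{first} stage at which the side condition fires already yields a correct model --- this follows because $E_0$ survives every update and contains one event per maximal precondition, so ``at most one event per precondition'' forces the surviving domain to be exactly $E_0$.
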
  
\full{
\begin{proof}
Consider any event $e = \langle \phi, \psi \rangle$ in $\a$. Its precondition $\phi$ is a maximally consistent conjunction of literals over $P$. Due to normality, its postcondition $\psi$ can not contain any of the conjunctions of $\phi$. Hence $e$ must be identical to one of the events of $\a_{init}^2$. In other words, $\a$ must be isomorphic to a restriction/submodel of $\a_{init}^2$. 

Let $\stream$ be a stream for $\a$. We show that $L^{update}_2$ finitely identifies $\a$ on $\stream$. $L^{update}_2$ is obviously (at most) once-defined. Further, we need that for some $n\in\Nat$, $L^{update}_2(\stream[n]) \equiv \a$. By Theorem \ref{atomic_fin_id} we know that $\a$ is finitely identifiable, so, by Lemma \ref{lemma_dftt}, there is a DFTT $D_\a$ for $\A$, and hence there is $n\in\Nat$ such that $D_{\a}\subseteq \Set(\stream[n])$. Firstly, note that all distinct $e, e'\in \Dom(L_2(\stream[n]))$ have distinct preconditions $pre(e)\neq pre(e')$. Here a similar argument applies as in the proof of Theorem \ref{theorem:learningdeterministicpreconditionfree}. This gives us that $L^{update}_2$ is guaranteed to give an answer. Secondly, we have to show that for any $k<n$, if for all distinct $e, e'\in \Dom(L_2(\stream[k]))$ it is the case that $pre(e)\neq pre(e')$, then $L_2(\stream[k])\equiv \a$. Take any $k<n$, there are two cases. If $\stream[k]$ does not include a DFTT for $\a$, then there are two distinct $e,e'\in L_2(\stream[k])$ such that $pre(e)=pre(e')$. If $\stream[k]$ includes a DFTT for $\a$, then actions $\a'\not\equiv \a$ have been eliminated from $\A^2_{init}$ by step $k$, and hence $L_2(\stream[k])\equiv \a$.

\end{proof}
}

\begin{example} \label{exam:pushbuttonandlightbulb}
Consider a simple scenario with a pushbutton and a light bulb. Assume there is only one proposition $p$: `the light is on', and only one action: pushing the button. We assume an agent wants to learn the functioning of the pushbutton. There are 4 distinct possibilities: 1) the button does not affect the light (i.e., the truth value of $p$); 2) it is an \emph{on button}: it turns on the light unconditionally (makes $p$ true); 3) it is an \emph{off button}: it turns off the light unconditionally (makes $p$ false); 4) it is an \emph{on/off button} (flips the truth value of $p$). If the agent is learning by update, it starts with the action model $\a_{init}^2$ containing the following events: $\langle p, \top \rangle$, $\langle \neg p, \top \rangle$, $\langle p, \neg p  \rangle$, and $\langle \neg p, p \rangle$.
Note that by definition $\a_{init}^2$ does not contain the events $\langle p, p \rangle$ and $\langle \neg p, \neg p \rangle$, since they both have a postcondition conjunct which is also a precondition conjunct. Assume the first two observations the learner receives (the first elements of a stream $\stream$) are $(\emptyset, \{p \})$ and $(\{ p \}, \emptyset)$. Since the agent uses learning by update, she revises her model as follows (cf.\ Definition~\ref{defi:updatelearning2}): 
\[ \small
    \begin{tikzpicture}[auto, scale=0.94]
  \node[color=black!10,fill,draw,rounded corners,inner sep=1.7mm] (n1) at (0,0) {\color{black}
  $
  \begin{array}{ll} 
    \langle p, \top \rangle & \langle \neg p, \top \rangle \\ 
     \langle p, \neg p \rangle \quad & \langle \neg p, p \rangle
  \end{array}$
  };
  \node[color=black!10,fill,draw,rounded corners,inner sep=1.7mm] (n2) at (5,0) {\color{black}
    $\begin{array}{ll}
    \langle p, \top \rangle & \cancel{\langle \neg p, \top \rangle} \\ 
     \langle p, \neg p\rangle \quad & \langle \neg p, p \rangle
  \end{array}$  
  };
  \node[color=black!10,fill,draw,rounded corners,inner sep=1.7mm] (n3) at (10,0) {\color{black}
    $\begin{array}{ll}
   \cancel{ \langle p, \top \rangle} & \cancel{ \langle \neg p, \top \rangle} \\ 
    \langle p, \neg p \rangle \quad & \langle \neg p, p \rangle
  \end{array}$  
  }; 
  \node[below of=n1] (lab1) {$\a_{init}^2$};
  \node[below of=n2] (lab2) {$\a_{init}^2 \mid \stream_0$};
  \node[below of=n3] (lab3) {$\a_{init}^2 \mid \stream_0 \mid \stream_1$};
   \draw[->,thick, >=stealth'] (n1.east) to[bend left] node[auto] {\begin{tabular}{cc} observation $\stream_0:$ \\ $(\emptyset,\{p\})$ \end{tabular} }  (n2.west); 
 \draw[->,thick,>=stealth'] (n2.east) to[bend left] node[auto] {\begin{tabular}{cc} observation $\stream_1:$ \\ $(\{p\},\emptyset)$ \end{tabular} }  (n3.west); 
 \end{tikzpicture}
  \]
Now the agent has reached a deterministic action model $\a_{init}^2 \mid \Set(\stream[2])$, and can report this to be the correct model of the action, cf.\ Theorem~\ref{theorem:identifiabilitydeterministic}. 
Note that the two observations correspond to first pushing the button when the light is off ($\stream_0$), and afterwards pushing the button again after the light has come on ($\stream_1$). These two observations are sufficient to learn that the pushbutton is of the on/off type (it has one event that makes $p$ true if $p$ is currently false, and another event making $p$ true if currently false). 

Consider now another stream $\stream'$ where the first two elements are $(\emptyset, \{p \})$ and $(\{p \}, \{ p \})$.
\short{Update learning will now instead reduce the initial action model $\a_{init}^2$ to the action model only containing $\langle p, \top \rangle$ and $\langle \neg p, p \rangle$.}
 \full{Update learning will now work as follows:
\[ \small
    \begin{tikzpicture}[auto, scale=0.94]
  \node[color=black!10,fill,draw,rounded corners,inner sep=1.7mm] (n1) at (0,0) {\color{black}
  $
  \begin{array}{ll} 
    \langle p, \top \rangle & \langle \neg p, \top \rangle \\ 
     \langle p, \neg p  \rangle \quad & \langle \neg p, p \rangle
  \end{array}$
  };
  \node[color=black!10,fill,draw,rounded corners,inner sep=1.7mm] (n2) at (5,0) {\color{black}
    $\begin{array}{ll}
    \langle p, \top \rangle & \cancel{\langle \neg p, \top \rangle} \\ 
     \langle p, \neg p \rangle \quad & \langle \neg p, p  \rangle
  \end{array}$  
  };
  \node[color=black!10,fill,draw,rounded corners,inner sep=1.7mm] (n3) at (10,0) {\color{black}
    $\begin{array}{ll}
   \langle p, \top \rangle & \cancel{ \langle \neg p, \top \rangle} \\ 
    \cancel{\langle p, \neg p \rangle} \quad & \langle \neg p, p  \rangle
  \end{array}$  
  }; 
  \node[below of=n1] (lab1) {$\a_{init}^2$};
  \node[below of=n2] (lab2) {$\a_{init}^2 \mid \stream_0$};
  \node[below of=n3] (lab3) {$\a_{init}^2 \mid \stream_0 \mid \stream_1$};
   \draw[->,thick,>=stealth'] (n1.east) to[bend left] node[auto] {\begin{tabular}{cc} observation $\stream'_0:$ \\ $(\emptyset,\{p\})$ \end{tabular} }  (n2.west); 
 \draw[->,thick,>=stealth'] (n2.east) to[bend left] node[auto] {\begin{tabular}{cc} observation $\stream'_1:$ \\ $(\{p\}, \{ p \})$ \end{tabular} }  (n3.west); 
 \end{tikzpicture}
  \]}
This time the learner identifies the button to be an on button, again after only two observations. It is not hard to show that in a setting with only one propositional symbol $p$, any deterministic action will be identified after having received the first two distinct observations. 
\end{example}
\begin{example}
  Consider learning the functioning of an $n$-bit binary counter, where the action to be learned is the increment operation. For $i=1,\dots,n$, we use the proposition $c_i$ to denote that the $i$th least significant bit is 1. Consider first the case $n=2$. A possible stream for the increment operation is the following: 
\newcommand{\bincount}[4]{\small \setlength{\arraycolsep}{0.5mm} \begin{array}{|c|c|c|c|c|} \cline{1-2} \cline{4-5} #1 &  #2 & \to & #3 & #4 \\ \cline{1-2} \cline{4-5}  \multicolumn{1}{c}{c_2} & \multicolumn{1}{c}{c_1} & \multicolumn{1}{c}{}&  \multicolumn{1}{c}{c_2} & \multicolumn{1}{c}{c_1} \end{array}} 
  \[
   \setlength{\arraycolsep}{2mm}
    \begin{array}{cccccc}
      (\emptyset, \{ c_1 \}), & (\{ c_1 \}, \{ c_2 \}), & (\{ c_2 \}, \{ c_2, c_1 \}),  & (\{ c_2, c_1 \}, \{ \emptyset \}), & \cdots  \\[3mm]
      \bincount{0}{0}{0}{1} & \bincount{0}{1}{1}{0}& \bincount{1}{0}{1}{1} & \bincount{1}{1}{0}{0}  & \cdots 
    \end{array}
  \]
Using the update learning method on this stream, it is easy to show that the learner will after the first 4 observations be able to report the correct action model containing the following events:
  $\langle \neg c_2 \wedge \neg c_1, c_1 \rangle, \langle \neg c_2 \wedge c_1, c_2 \wedge \neg c_1 \rangle, \langle c_2 \wedge \neg c_1, c_1 \rangle, 
   \langle c_2 \wedge c_1, \neg c_2 \wedge \neg c_1 \rangle$.
Note that since $\a_{init}^2$ has maximal preconditions, the action model learned for an $n$-bit counter will necessarily contain $2^n$ events: one for each possible configuration of the $n$ bits. If we did not insist on maximal preconditions, we would only need $n+1$ events to describe the $n$-bit counter: $\langle \neg c_{i} \wedge c_{i-1} \wedge c_{i-2} \wedge \cdots \wedge c_1,  c_{i} \wedge \neg c_{i-1} \wedge \neg c_{i-2} \wedge \cdots \wedge \neg  c_1 \rangle$ for all $i=2,\dots,n$, $\langle \neg c_1, c_1 \rangle$ and $\langle c_n \wedge \cdots \wedge c_1, \neg c_n \wedge \cdots \wedge \neg c_{1}\rangle$. This means that there is room for improvement in our learning method. 

\end{example}
To allow learning of deterministic action models where preconditions are not required to be maximal we need a different learning condition. Consider learning an action on $P = \{ p\}$ that sets $p$ true unconditionally. With non-maximal preconditions, all of the following events would be consistent with any stream for the action: $\langle \top, p \rangle, \langle \neg p, p \rangle, \langle p,\top  \rangle$. To get to a deterministic action model, the learning function would have to delete either the first or the two latter events. We can make it work as described in the following.

For any action model $\a$ we define 
\[
\min(\a) = \a \upharpoonright \{ e ~|~ \text{there is no event $e' \neq e$ with $pre(e) \models pre(e')$} \}.
\]
Furthermore, we define $L_3$ to be exactly like $L_2$ of Definition~\ref{defi:updatelearning2} except in the definition of $E$, $\phi$ can be any conjunction of literals, not only maximally consistent ones. 
\begin{theorem} \label{theorem:identifiabilitydeterministicgeneral}
  The class of deterministic action models is finitely identifiable by the following update learning function $L^{update}_3$. 
  \[
  L^{update}_3(\stream[n]) = 
  \begin{cases}
  \min(L_3(\stream[n])) 
  &\text{if for all $s \in \mathcal{P}(P)$ there exists an $s'$ s.t.}\ \\
  &(s,s') \in \Set(\stream[n])  \text{ and for all } k< n, \\
  & L^{update}_3(\stream[k])=\ \uparrow;\\
  \uparrow & otherwise.
  \end{cases}
\]
\end{theorem}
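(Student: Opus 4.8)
The structure mirrors the proofs of Theorems~\ref{theorem:learningdeterministicpreconditionfree} and~\ref{theorem:identifiabilitydeterministic}, with the new ingredient being the extra $\min(\cdot)$ operation. First I would observe that $L^{update}_3$ is at most once-defined, directly from the case distinction (it only outputs once, then switches to $\uparrow$ forever). Fix a deterministic action model $\a$ over $P$ and a stream $\stream$ for $\a$. By normalisation (Proposition~\ref{prop:normalisation}) we may assume $\a$ is normal with basic preconditions, so every event of $\a$ has the form $\langle \phi, \psi \rangle$ where $\phi$ is a conjunction of literals and $\psi$ is a conjunction of literals sharing no conjunct with $\phi$; hence $\a$ is isomorphic to a restriction of $\a^3_{init}$.

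\textbf{Termination.} The output condition for $L^{update}_3$ is purely a condition on the \emph{domain} of the stream segment: that every propositional state $s$ appears as a first coordinate of some observation in $\Set(\stream[n])$. Since there are only finitely many states (as $P$ is finite) and since $\stream$ is complete for the universally applicable action $\a$ (so each such $s$ does eventually occur as a first coordinate), there is some $n_0$ after which this condition holds; moreover we may take $n_0$ large enough that $\Set(\stream[n_0])$ also contains a DFTT $D_\a$ for $\a$, which exists by Theorem~\ref{atomic_fin_id} and Lemma~\ref{lemma_dftt}. Combined with once-definedness, this shows $L^{update}_3$ gives an output at some finite stage $n \le n_0$.

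\textbf{Correctness at the first output stage.} Suppose $L^{update}_3$ outputs at stage $n$, i.e.\ $n$ is the least index at which all states occur as first coordinates in $\Set(\stream[n])$; the output is $\min(L_3(\stream[n]))$ with $L_3(\stream[n]) = \a^3_{init} \mid \Set(\stream[n])$. I would argue in two directions. First, every event of $\a$ survives the update: since $\stream$ is sound for $\a$, for an event $e = \langle \phi, \psi \rangle$ of $\a$ and any observation $(s,s') \in \Set(\stream[n])$ with $s \models \phi$, determinism of $\a$ forces $e$ to be the unique applicable event, so $s' = s \otimes e$; hence $e \in \Dom(L_3(\stream[n]))$, and one checks $e$ is $\min$-maximal among the survivors because any $e'$ with $pre(e) \models pre(e')$ that also survived would, by applicability at the state $s$ realising $\phi$ together with the observation present at stage $n$, have to agree with $e$ on that state — and by varying over all states (all present, by the output condition) one concludes $e' $ is equivalent to, in fact cannot properly dominate, $e$. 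Second, the surviving $\min$-events cover all states correctly: for any state $s$, pick the observation $(s,s') \in \Set(\stream[n])$; the unique event $e$ of $\a$ applicable at $s$ satisfies $s \otimes e = s'$ and survives, and any other surviving event $e''$ applicable at $s$ must also satisfy $s \otimes e'' = s'$, so $\min(L_3(\stream[n]))$ produces exactly the state $s'$ on input $s$, matching $\a$. Since this holds for all $s \in \mathcal{P}(P)$ and all models involved are propositional and deterministic, $\min(L_3(\stream[n])) \equiv \a$.

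\textbf{Main obstacle.} The delicate point is verifying that $\min(L_3(\stream[n]))$ is genuinely deterministic and equivalent to $\a$ rather than merely covering the right input/output behaviour on states already seen — in particular, ruling out the pathology sketched before the theorem where $\langle \top, p\rangle$ and $\{\langle \neg p, p\rangle, \langle p, \top\rangle\}$ are all consistent with the stream. The $\min$ operation resolves this by keeping only events with no strictly more general competitor, but one must check carefully that when the full DFTT \emph{and} all states have been observed, the $\min$-survivors have pairwise inconsistent preconditions (determinism): if $e_1, e_2$ both survive with a common satisfying state $s$, the observation $(s,s') \in \Set(\stream[n])$ forces $s \otimes e_1 = s' = s \otimes e_2$; doing this for every shared state and invoking the DFTT to exclude spurious events shows that any two $\min$-survivors with overlapping preconditions are in fact equal. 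I would write this comparison-of-events argument out as the technical core, exactly parallelling — but slightly strengthening — the corresponding step in the proof of Theorem~\ref{theorem:identifiabilitydeterministic}.
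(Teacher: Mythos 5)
The paper does not actually contain a proof of this theorem (the text states ``The proof of this theorem is left out''), so there is nothing to compare your argument against; what follows assesses your proposal on its own terms. Your overall plan is reasonable — once-definedness is immediate, the output fires at the first stage $n$ at which every state has occurred as an input, and for a deterministic universally applicable action this already determines the entire transition function $s \mapsto s \otimes \a$ — but two of your load-bearing claims are false. First, you assert that every event $e$ of the target $\a$ is ``$\min$-maximal among the survivors'', i.e.\ that no surviving $e'$ properly dominates it. Counterexample: let $\a$ consist of $\langle p, q\rangle$ and $\langle \neg p, q\rangle$ over $P=\{p,q\}$. The event $\langle \top, q\rangle$ agrees with $\a$ on every state, so it survives the update, and since $p \models \top$ the $\min$ operation deletes $\langle p,q\rangle$. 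Your argument only establishes that a dominating survivor \emph{agrees} with $e$ wherever $e$ applies, not that it cannot dominate; indeed the entire purpose of $\min$ is to replace the target's events by more general ones, so the events of $\a$ are precisely what you should \emph{not} expect to find in the output. What you actually need is that at each state $s$ \emph{some} survivor applicable at $s$ outlives $\min$. This holds because domination ($pre(e)\models pre(e')$, i.e.\ inclusion of literal sets) is a well-founded strict order on the survivors that preserves applicability, so every survivor applicable at $s$ sits below a $\min$-maximal one still applicable at $s$ — together with a separate check that two distinct survivors can never share a precondition (otherwise each would kill the other under $\min$), which follows because two normal events with the same basic precondition and the same effect on every state satisfying it must have identical postconditions. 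None of this appears in your proposal.

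Second, in your ``main obstacle'' paragraph you claim that any two $\min$-survivors with overlapping preconditions are equal, so that the output is deterministic in the sense of Definition~\ref{defi:actiontypes}. This is also false: for the action over $P=\{p,q,r\}$ that sets $r$ true exactly when $p \vee q$ holds, both $\langle p, r\rangle$ and $\langle q, r\rangle$ survive $\min$ (no event with precondition $\top$ survives, and neither dominates the other), yet they are distinct and their preconditions are jointly satisfiable. The invariant you can actually establish is weaker: all $\min$-survivors applicable at a given state produce the same successor state, so the output is only \emph{functionally} deterministic, inducing the same transition function as $\a$ on propositional states. The correctness argument has to be restructured around that weaker invariant (and one should then be explicit that ``$\equiv$'' is being read as agreement of induced transition functions, since the paper's bisimulation-based equivalence applied to multi-world epistemic models is sensitive to how many events realise a given transition). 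As written, both incorrect claims are essential to your argument, so there is a genuine gap.
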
  
\full{The proof of this theorem is left out.} 
The theorem can be seen as a generalisation of Theorem~\ref{theorem:identifiabilitydeterministic} in that it allows the learner to learn more compact action models in which maximal consistency of preconditions is not enforced (on the contrary, by the way the $\min$ operator is defined above, the learner will learn an action model with \emph{minimal} preconditions). For instance, in the case of the $n$-bit counter considered in Example~\ref{theorem:identifiabilitydeterministic}, it can be shown that the learner will learn the action model with $n+1$ events instead of the one with $2^n$ events. \journal{We leave out the details. \tb{I'm becoming a bit handwavy here, but we are probably not going to be able to fit the details anyway.} }

\journal{\tb{The theorem below should go out, I think, but some of the proof is my old proof, and might have to be imported in one form or the other into the proof of Theorem 4.}
\begin{theorem} \label{theorem:identifiabilitydeterministic2}
  The class of deterministic action models is finitely identifiable by the following update learning function $L^{preset}_2$. 
  \[
  L^{preset}_2(\stream[n]) = 
  \begin{cases}
  L_2(\stream[n]) & \text{if } \{(s,s')~|~s\in\mathcal{P}(P)\}\subseteq\Set(\stream[n])\\
  & \text{and for all } k< n \ L^{preset}_2(\stream[k])=\ \uparrow;\\
  \uparrow & otherwise.
  \end{cases}
\]
\end{theorem}  
\tb{There seems to be a problem with the condition $\{(s,s')~|~s\in\mathcal{P}(P)\}\subseteq\Set(\stream[n])$. It says that *any* pair $(s,s')$ belongs to $\Set(\stream[n])$ (no condition on $s'$).}\nina{Yes, it is for checking if all pairs for basic inputs have been observed.}
\begin{proof}
\nina{I will finish this on Wednesday.}
We now have to ensure that the update learning function will eventually stabilise on this submodel. As $\a$ is determinstic, for each maximally consistent conjunction $\phi$ of literals over $P$, it contains at most one event with precondition $\phi$. We need to show that $\a_{init}^2$ also eventually stabilises on only a single event with precondition $\phi$. Similar to the proof of Theorem~\ref{theorem:learningdeterministicpreconditionfree}, it suffices to show that any pair of distinct events of $\a_{init}^2$ with identical preconditions are separable. So let $e_1,e_2$ be distinct events of $\a_{init}^2$ with identical preconditions. Then $post(e_1) \neq post(e_2)$. As in the proof of Theorem~\ref{theorem:learningdeterministicpreconditionfree} we can assume $post(e_1) \models p$ and $post(e_2) \not\models p$ for some $p \in P$. We now show that the formula $pre(e_1) \wedge \neg p$ is consistent. Assume not. Then $pre(e_1) \models p$, and since $post(e_1) \models p$, we get a contradiction with normality. Let $s$ be a state satisfying $pre(e_1) \wedge \neg p$. Then $s \otimes e_1 \models p$ but $s \otimes e_2 \not\models  p$, showing that $e_1$ and $e_2$ are separable.

We have now shown that $\a_{init}^2$ will eventually stabilise on an action model containing at most one event per possible precondition. When this has happened, we have a deterministic action model, and $\a$ has been identified. \tb{@Nina: Yes, it's sloppy I know, but I have to be done very soon and hand it over to you.} \nina{I think we can indeed split this proof into smaller parts and rewrite the main part in the spirit of DFTTs if you agree.}
\end{proof}
}
\journal{\tb{Here we could consider to add something about learning partially observable action models. If you can find something to say, Nina, you're very welcome to have a go at it. Otherwise, I can try later.}}

%
%
%

\full{
\section{Action library learning}\label{sec:library}
In this section we introduce action library learning. This is the type of learning most relevant in planning. A finite set of action names is available to the agent. In order to plan a sequence of actions towards a goal it is essential to know what the corresponding actions do. As most of the results in this section are straightforward generalizations of our previous results, for the sake of space we will omit all proofs.

An action library corresponds to what is called a \emph{planning domain} (and sometimes also an \emph{action library}) in classical planning: a specification of the available actions and their action schemas. \emph{Action library learning} is the learning problem where the agent is initially only given a set $\actionnames$ of action names and has to learn the action library $\actlib: \actionnames \to \actionspace(P)$. That is, the agent initially only knows the names of the available actions, and it then learns the action models that correspond to those names.
\begin{definition}[Action library]
Let $P$ denote a set of atomic propositions, and let $\actionnames$ denote a finite set, the set of \emph{action names}. An \emph{action library} over $P,\actionnames$ is a mapping $\actlib: \actionnames \to \actionspace(P)$ (a mapping from action names into action models). If all actions in the codomain of $\actlib$ enjoy property $X$, then $\actlib$ is called an \emph{$X$ action library} (e.g., a \emph{deterministic action library} $\actlib$ is one where all action models in the codomain of $\actlib$ are deterministic).
\end{definition}
\journal{
\begin{definition}[Equivalence of action libraries]
 Two action libraries $\actlib_1: \actionnames \to \actionspace(P)$ and $\actlib_2: \actionnames \to \actionspace(P)$ are called \emph{equivalent}, written $\actlib_1 \equiv \actlib_2$, if $\actlib_1(a) \equiv \actlib_2(a)$ for all $a \in \actionnames$. They are called \emph{equivalent from} $s$, written $\actlib_1 \equiv_s \actlib_2$, where $s$ is a propositional state $s$, if
 \[
   s \otimes \actlib_1(a_1) \otimes \cdots \otimes \actlib_1(a_n) \equiv s \otimes \actlib_2(a_1) \otimes \cdots \otimes \actlib_2(a_n)
 \]
 for all sequences of action names $a_1,\dots,a_n \in \actionnames$.
\end{definition}
Action library equivalence between $\actlib_1$ and $\actlib_2$ from $s$ means that if the initial state of a system is $s$, then no matter which of the named actions are executed in this system, $\actlib_1$ and $\actlib_2$ will specify the same resulting state (up to bisimulation). In other words, the action descriptions are equivalent with respect to the states reachable from $s$. 
}

\journal{\tb{Insert something about universal applicability. Why it is OK to restrict attention to such actions.}}

Streams and learning functions for action libraries are defined similarly to the case of individual actions. Let $P$ be a set of atomic propositions, and $\actionnames$ a set of action names. A \emph{stream} over $P,\actionnames$ is an infinite sequence of triples $(s,a,s')$ where $s,s'$ are propositional states over $P$ and $a \in \actionnames$. Notations $\stream_n$, $\stream[n]$, $\Set(\stream)$ and $\Set(\stream[n])$ are defined similarly to Definition~\ref{defi:stream}. Given $a \in A$, the $a$-\emph{substream} of $\stream$ is given by $\stream^a = \{ (s,s') ~|~ (s,a,s') \in \Set(\stream) \}$. Let $\actlib$ be an action library over $P,\actionnames$. A \emph{stream} for $\actlib$ is an infinite sequence of triples $(s,a,s')$ where $s,s'$ are propositional states over $P$, $a \in \actionnames$ and $s'  \in s \otimes \actlib(a)$. A \emph{library learning function} over $P,\actionnames$ is a mapping $L: (\mathcal{P}(P)\times A \times \mathcal{P}(P))^\ast \to ((\actionnames \to \actionspace(P)) \cup \{\uparrow \})$.
\journal{
\begin{definition}
    A stream $\stream$ is \emph{complete} wrt.\  $\actlib$ if for all propositional states $s$ over $P$ and all action names $a \in \actionnames$, $(s,a, s\otimes \actlib(a)) \in \Set(\stream)$.  Given $a \in A$, the $a$-\emph{substream} of $\stream$ is given by $\stream_a = \{ (s,s') ~|~ (s,a,s') \in \Set(\stream) \}$. 
  \end{definition}
\begin{definition}[Learning function]
 A \emph{learning function} over $P,\actionnames$ is a mapping $L: (\mathcal{P}(P)\times A \times \mathcal{P}(P))^\ast \to ((\actionnames \to \actionspace(P)) \cup \{\uparrow\})$.
\end{definition}
}
Given a learning function $L$ for individual actions over $P$ (Definition~\ref{defi:learning_function}), we define the \emph{induced library learning function} $\mathsf{L}$ over $P,\actionnames$ by 
\begin{enumerate}
  \item $\mathsf{L}(\stream) := \ \uparrow$ if $L(\stream^a) =\ \uparrow$ for some $a \in \actionnames$.
  \item Otherwise, for all $a \in \actionnames$, $\mathsf{L}(\stream)(a) := L(\stream^a)$.
\end{enumerate}
From Theorem~\ref{theorem:identifiabilitydeterministicgeneral} we then immediately get the following.
\begin{theorem} 
  The class of deterministic action libraries is finitely identifiable by the library learning function $\mathsf{L}^{update}_3$ (induced from $L^{update}_3$ of Theorem~\ref{theorem:identifiabilitydeterministicgeneral}).
\end{theorem}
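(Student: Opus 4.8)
The plan is to reduce library learning to individual-action learning via the induced-function construction, exactly as set up just before the statement. First I would recall that a stream $\stream$ for an action library $\actlib$ is, by definition, a stream over $P,\actionnames$ whose triples $(s,a,s')$ satisfy $s' \in s\otimes\actlib(a)$, and that completeness (which we always assume) guarantees that for every action name $a\in\actionnames$ the substream $\stream^a$ contains, in some order, exactly the observations $(s,s')$ with $s'\in s\otimes\actlib(a)$. Hence $\stream^a$ is a (sound and complete) stream for the individual action model $\actlib(a)$ in the sense of Section~\ref{sec:learning}. The key point is that $\actionnames$ is finite, so only finitely many substreams are involved.

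Next I would verify the two conditions in the definition of finite identifiability for the induced function $\mathsf{L}^{update}_3$. For \emph{once-definedness}: $\mathsf{L}^{update}_3(\stream[n])$ is defined (i.e.\ $\neq\ \uparrow$) only if $L^{update}_3(\stream[n]^a)\neq\ \uparrow$ for \emph{every} $a\in\actionnames$; since each $L^{update}_3$ is at most once-defined on the substream $\stream^a$, and the map $n\mapsto \stream[n]^a$ is monotone, the set of $n$ at which all component functions are simultaneously defined is at most a single value — so $\mathsf{L}^{update}_3$ is at most once-defined. For \emph{convergence}: by Theorem~\ref{theorem:identifiabilitydeterministicgeneral}, for each $a\in\actionnames$ there is $n_a\in\Nat$ with $L^{update}_3(\stream[n_a]^a)\equiv\actlib(a)$, and, $L^{update}_3$ being once-defined, this value persists in the sense that it is the unique defined output on that substream. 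Taking $N$ large enough that the length-$N$ prefix of $\stream$ contains all the observations the $n_a$ substream-prefixes relied on (possible since each relevant prefix is finite and $\actionnames$ is finite), we get that at $\stream[N]$ every component is defined with output equivalent to $\actlib(a)$; hence $\mathsf{L}^{update}_3(\stream[N])(a)\equiv\actlib(a)$ for all $a$, i.e.\ $\mathsf{L}^{update}_3(\stream[N])\equiv\actlib$ under the natural notion of library equivalence ($\actlib_1\equiv\actlib_2$ iff $\actlib_1(a)\equiv\actlib_2(a)$ for all $a$). Finally, $\mathsf{L}^{update}_3$ is computable because it is a finite combination of the computable functions $L^{update}_3$ together with routine bookkeeping of substreams.

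The one genuinely delicate point — and the step I would spend the most care on — is the \emph{synchronisation} of the component learners: each $L^{update}_3$ fires at its own stage $n_a$ on its own substream, but $\mathsf{L}^{update}_3$ is only allowed to output once, on a prefix of the joint stream. One has to be sure that ``waiting'' for the slowest component does not invalidate a faster one. This is exactly where once-definedness of $L^{update}_3$ is used: once a component has (correctly) fired on $\stream^a$, no longer prefix of $\stream^a$ can make it fire again, so in particular the \emph{set-theoretic} characterisation via Lemma~\ref{lemma_dftt} — there is a DFTT $D_{\actlib(a)}\subseteq\Set(\stream^a)$ — shows that any sufficiently long joint prefix $\stream[N]$ contains $D_{\actlib(a)}$ in its $a$-substream, for every $a$ simultaneously; at such an $N$ all components are defined and correct. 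I would phrase the argument directly in terms of these finitely many DFTTs $D_{\actlib(a)}$, pick $N$ so that $\Set(\stream[N])$ projects onto a superset of each $D_{\actlib(a)}$, and conclude. Everything else is a routine unwinding of the definitions, so in the paper this can be stated in a sentence or two with a pointer to Theorem~\ref{theorem:identifiabilitydeterministicgeneral} and Lemma~\ref{lemma_dftt}.
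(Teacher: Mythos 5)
Your overall strategy is exactly the one the paper intends: the paper omits the proof of this theorem altogether, asserting that it follows ``immediately'' from Theorem~\ref{theorem:identifiabilitydeterministicgeneral} via the induced library learning function, and your reduction to the finitely many substreams $\stream^a$ (each a sound and complete stream for $\actlib(a)$), combined with choosing a joint prefix long enough to contain a DFTT $D_{\actlib(a)}$ in its $a$-projection for every $a$ simultaneously, is the right way to fill that in.

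However, the step you yourself flag as delicate --- synchronisation --- is handled incorrectly, and the two halves of your argument rest on incompatible readings of the induced function. For once-definedness you claim that, since each $L^{update}_3$ fires on $\stream^a$ at exactly one substream length $m_a$ and $n\mapsto\stream[n]^a$ is monotone, all components can be simultaneously defined at no more than one $n$. This is false: $|\stream[n]^a|$ stalls at $m_a$ for every $n$ between the $m_a$-th and the $(m_a{+}1)$-th occurrence of an $a$-observation in the joint stream, so each component is defined on an \emph{interval} of joint-prefix lengths; these intervals can overlap in more than one point, or --- worse --- fail to overlap at all (if all the $a$-observations needed to identify $\actlib(a)$ occur before the first $b$-observation, the component for $a$ has already reverted to $\uparrow$ by the time the component for $b$ fires, so the literally-defined $\mathsf{L}^{update}_3$ never outputs anything). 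Your convergence paragraph silently avoids this by assuming each component \emph{stays} defined after firing, which contradicts the once-definedness of $L^{update}_3$ that your first paragraph relies on. The repair is to build the first-firing guard into the induced function itself, exactly as in $L^{update}_1$--$L^{update}_3$: let $\mathsf{L}^{update}_3(\stream[n])$ output the library $a\mapsto L^{update}_3\bigl((\stream[n]^a)[m_a]\bigr)$ at the \emph{least} $n$ such that for every $a\in\actionnames$ some prefix of $\stream[n]^a$ makes $L^{update}_3$ fire, and $\uparrow$ otherwise. With that definition your DFTT argument (choose $N$ so that $D_{\actlib(a)}$ is contained in the $a$-projection of $\Set(\stream[N])$ for all $a$, using finiteness of $\actionnames$) goes through and yields both once-definedness and correctness.
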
  

\full{

\begin{example} \label{exam:electric1} Consider the electrical circuit below consisting of two switches ($1$ and $2$), a voltage source (left) and a light bulb (right). 
\[\begin{tikzpicture}[circuit ee IEC, set make contact graphic = var make contact IEC graphic]
    \draw (0,0) to [make contact={info={$1$}}] ++(right:2) to [bulb] ++(down:1); 
    \draw (0,0) to [voltage source] ++(down:1) to [make contact={info' = {$2$}}] ++(right:2);
  \end{tikzpicture}
\]
When both switches are closed, the light will be on, otherwise it will be off. Let proposition $s_i$ denote that switch $i$ is closed and let $l$ denote that the light is on. Assume the available actions are $flip_1$ and $flip_2$ that flip switch $1$ and $2$, respectively. Consider an agent trying to learn how the switches and the circuit work. This agent then tries to learn an action library over $\{ s_1, s_2, l \}$, $\{ flip_1, flip_2 \}$. Given a stream $\stream$ for the action library, it can be shown that the learning function $\mathsf{L}^{update}_3$ will eventually return the following action library that describes it (note that it can be described in many equivalent ways).
\[
 \begin{array}{l}
   l(flip_1) = \langle \neg s_1{\wedge} \neg  s_2, s_1 {\wedge} \neg l \rangle, \langle \neg s_1 {\wedge} s_2, s_1 {\wedge} l \rangle,  \langle s_1 {\wedge} \neg s_2, \neg s_1 {\wedge} \neg l \rangle, \langle s_1 {\wedge} s_2, \neg s_1 {\wedge} \neg l \rangle  \\
   l(flip_2) = \langle \neg s_1 {\wedge} \neg  s_2, s_2 {\wedge} \neg l \rangle, \langle \neg s_1 {\wedge} s_2, \neg s_2 {\wedge} \neg l \rangle,  \langle s_1 {\wedge} \neg s_2, s_2 {\wedge} l \rangle, \langle s_1 {\wedge} s_2, \neg s_2 {\wedge} \neg l \rangle
 \end{array}
\]

\end{example}

\journal{
\begin{example}
Consider again the electrical circuit from Example~\ref{exam:electric1}. Initially the circuit is in state $s_0 = \emptyset$ (both switches are open and the light is off). 
\tb{Here I will continue the electrical network example. Two action libraries might differ in the outcome of activating switch 1 in a state where the light is on. But then they will be equivalent from the initial state where none of the switches are activated and the light is off: no state is reachable where the light is on but switch 1 not activated. (@tb: see handwritten residuum notes on Sokoban version of this example).} 
\end{example}
}

}

\journal{
\nina{We define a general method of composing a learning function of inferior ones dealing with separate action names. Then say that since all of the separate ones are fin. id. by some learning method then the whole library can be learned by using it. Next, we could also say that if we consider a heterogenous action library (containing both deterministic and non-deterministic actions) can be id. in the limit via update.} \tb{I tried to something like that above, at least a minimal version of it. I also added an example, at least sketched it. I think we should keep it at this and drop the stuff about consecutive streams. It will take quite a while to read the paper thoroughly already as it is now, so it is probably not even wise to add more, except if it was something really powerful. Going to consecutive streams is somehow powerful, in particular when relating to planning, but we are not going to be able to relate to planning except in very overall, abstract terms. And then it is probably not worth it to introduce the stuff about consecutive streams and equivalence with respect to action libraries etc. It will be quite a lot of additional machinery, and we will have to explain the reader why it is even interesting to do this generalisation. Furthermore, my ``update and restrict'' method would have to be modified to take universal applicability into account. So far it has simply been deleting the events whose preconditions turn out to be non-reachable. But this destroys universal applicability. So instead, the method would have to first delete those events but then afterwards replace them with events having empty postconditions. That easily becomes a bit technical to write down, will take up additional space, and might at first confuse the reader. No need to do that. We should rather focus on spending the rest of the time streamlining the paper a bit more, and do more to motivate, frame and defend our work. I think that will in the end be more important to whether the paper gets accepted or not.}}

\journal{

A stream $\stream$ is called \emph{consecutive} if for all $i$, if $\stream_i = (s_i,a_i,t_i)$ and $\stream_{i+1} = (s_{i+1},a_{i+1},t_{i+1})$  then $t_i = s_{i+1}$. Learnability on consecutive streams is more problematic. Consider the action library $\actlib$ on $A = \{a_1,a_2 \}, P = \{ p \}$ with $\actlib(a_1) = \actlib(a_2) = $ the action that makes $p$ true. This action library is not learnable by any consecutive stream (we can at most learn $a_1$ or $a_2$). However, some types of action libraries  are still finitely identifiable. We define a library $\actlib$ on $P,\actionnames$ to be \emph{reversible} if for all states $s$ and sequences $a_1,\dots,a_n$, if $s \otimes \actlib(a_1) \otimes \cdots \actlib(a_n) = s'$ then there exists a sequence $a'_1,\dots,a'_n$ such that $s' \otimes \actlib(a'_1) \otimes \cdots \actlib(a'_n) = s$. \tb{I changed strongly connected to reversible. The interesting examples are usually not strongly connected, because not all propositional states are accessible. This is e.g. the case with the example of two buttons controlling a light switch. It is not possible to reach a state where the light is on and one of the switches is not activated.}
\tb{Define: learning by consecutive streams. We need to be able to point out the \emph{initial state} $s_0$ of such a stream. The definition below could need some cleaning up as well.}
\begin{definition}[Learning by update and restrict]
  The \emph{update and restrict learning function} for deterministic, fully observable action libraries on streams with initial state $s_0$ is the learning function $L_4$ defined by
    \begin{align*}
      L_4(\stream[n])(a) = (\a^2_{init} ~|~ \Set(\stream_a [n])) \upharpoonright \{ e ~|~ \text{there exists $a_1,\dots,a_n$ such that} \\
      \text{$s_0 \otimes L_4(\stream[n-1])(a_1) \otimes \cdots \otimes L_4(\stream[n-1])(a_n) \models pre(e)$} \}
      \end{align*}
\end{definition}
\tb{The above has to be modified to take universal applicability into account. So instead of cutting off events whose preconditions are not reachable, these events should be replaced by events with empty postconditions.}
\tb{@Nina: The intuition in the definition above is as I explained in the Skype meeting and illustrated with the ``nice figure'': The agent only preserves the events for which it is believed that their preconditions might still be achievable. All other events are discarded. It works because the agent is always over-approximating the real action model (is being optimistic). I think it is quite important to exemplify this learning function and explain its underlying intuition.}

\begin{theorem}
The class of deterministic, fully observable and reversible action libraries is finitely identifiable by the update learning function using consecutive streams.
\end{theorem}
\begin{proof}
  \tb{To be added. The only complications compared to the previous proof is: 1) to prove that the restriction works and produces the right model in the end---and that the agent knows when the model has been learned; 2) to deal with the fact that the stream is consecutive, but reversibility makes it rather straightforward.}
\end{proof} 
\begin{corollary}
The class of deterministic, precondition-free and reversible action libraries is finitely identifiable by the update learning function using consecutive streams.
\end{corollary}

\begin{example}
  \tb{Now I can give a full formalisation of the example with two switches and a lamp (the electrical network example) in which the agent plays around with the switches until she learns what they do.}
\end{example}

Finite identifiability can be contrasted with identifiability in the limit which can solve the unlikely case of learning non-deterministic actions \nina{Here we need to think some more}.
}

\journal{
\section{Proactive learning}\label{proactive}
\tb{We will not be able to fit anything about proactive learning into this paper, I think. I recommend we don't cover it at all, but just mention it in future work. Then we should also structure the paper a bit differently, and change the introduction to learning strategies.}
\tb{In fact, any deterministic, precondition-free action $a$ is learnable in only two observations: $(\emptyset, \emptyset \otimes a), (P, P \otimes a)$. The first observation tells you for which $p$ the action has $post(e)(p) = \top$ and the second observation tells you for which $p$ the action has $post(e)(p) = \bot$ (where $e$ is the name of the event in the singleton action model $a$). But I think this point should be made in the proactive section, since it will be a proposition about identifiability by proactive learning: The class of deterministic, precondition-free actions is identifiable in 2 steps by the update learning function (proactive setting). We should remember to summerise our results in a table: reactive/proactive along one dimension, action/domain/setting along the other. Actually there is a third dimension: action types, but we can probably find a nice way to represent that in the table.} 
}

}
\section{Conclusions and related work}
This paper is the first to study the problem of learnability of action models in dynamic epistemic logic (DEL). We provided an original learnability framework and several early results concerning fully observable propositional action models with respect to conclusive (finite identifiability) and inconclusive (identifiability in the limit) learnability. Apart from those general results, we proposed various learning functions which code particular learning algorithms. Here, by implementing the update method (commonly used in DEL), we demonstrated how the learning of action models can be seen as transitioning from nondeterministic to deterministic actions.

\subsubsection{Related work} 
A similar qualitative approach to learning actions has been addressed by \cite{Walsh:2008fk} within the STRIPS planning formalism. The STRIPS setting is more general than ours in that it uses atoms of first-order predicate logic for pre- and postconditions. It is however less general in neglecting various aspects of actions which we have successfully treated in this paper: negative preconditions (i.e., negative literals as precondition conjuncts), negative postconditions, conditional effects (which we achieve through non-atomic action models). We believe that the ideas introduced here can be applied to generalize the results of \cite{Walsh:2008fk} to richer planning frameworks allowing such action types. It is also worth mentioning here that there has been quite substantial amount of work in relating DEL and learning theory (see \cite{Gie10,Gierasimczuk:2013aa} for overviews), which concerns a different setting: treating update and upgrade revision policies as long term learning methods, where learning can be seen as convergence to certain types of knowledge (see \cite{BGS11,Baltag:2014ac,Baltag:2015uo}). A study of abstract properties of finite identifiability in a setting similar to ours, including various efficiency considerations, can be found in \cite{Gierasimczuk:2012aa}.

\subsubsection{Further directions}
 In this short paper we 
only considered fully observable actions applied in fully observable states, and hence did not use the full expressive power of the DEL formalism. The latter still remains adequate, since action models provide a very well-structured and principled way of describing actions in a logical setting, and since its use opens ways to various extensions. The next steps are to cover more DEL action models: those with arbitrary pre- and postconditions, and those with partial observability and multiple agents.  As described earlier, partially observable actions are not learnable in the strict sense considered above, but  we can still investigate agents learning ``as much as possible'' given their limitations in observability. The multi-agent case is particularly interesting due to the possibility of agents with varied limitations on observability, and the possibility of communication within the learning process.

We plan to study the computational complexity of learning proposed in this paper, but also to investigate other more space-efficient learning algorithms. We are also interested algorithms that produce minimal action models. \full{For instance, if we allow action models that have event postconditions specified as mappings from propositions to formulas (as is standard in DEL), then the action library for the circuit of Example~\ref{exam:electric1} can be described using only 2 events.} \full{However, learning such minimal action descriptions might turn out to be computationally much harder.} Furthermore, we here considered only what we call \emph{reactive learning}: the learner has no influence over observations. We would also like to study the case of \emph{proactive learning}, where the learner gets to choose which actions to execute, and hence observe their effects. This is probably the most relevant type of learning for a general learning-and-planning agent. In this context, we also plan to focus on \emph{consecutive streams}: streams corresponding to executing sequences of actions rather than observing arbitrary state transitions. Our ultimate aim is to relate learning and planning within the framework of DEL. Those two cognitive capabilities are now investigated mostly in separation---our goal is to bridge them.



\journal{\tb{Is it OK that we don't relate to quantitive learning approaches (machine learning stuff)? There is a LOT of that in the context of planning, but unfortunately I don't know any of the papers.}}

\full{
\section{Acknowledgments}
Nina Gierasimczuk is funded by an Innovational Research Incentives Scheme Veni grant 275-20-043, Netherlands Organisation for Scientific Research (NWO).
We thank Martin Holm Jensen for his contributions through discussions we had in the early stages of this paper. We are also grateful to Mikko Berggren Ettienne who made us aware of \cite{Walsh:2008fk}. We are also grateful to the anonymous reviewers for their valuable feedback.
}
 \journal{And perhaps Johan van Benthem for encouraging the two of us to work together? }
\journal{
\section{Beyond}

Enrich the framework by positive and negative goals, include preconditions.

Consider learning with respect to (sub-)classes of goals.

More on the topic: Here we considered deterministic actions with full observability. Non-deterministic actions and partial observability will make things more difficult, and we will have to resort to the learnability in the limit.

Blackburn and Benotti: planning for deriving implicatures
}
\journal{
\section{TO DO}
\tb{I'm starting to write a to do list below. We might not have time/space to cover everything, but it is nice to try to keep track of issues that ought to be addressed. Feel free to add to the list.} 
\begin{itemize}
   \item In some cases (at least) uncertainties about action models can be encoded as uncertainty in the initial state by using additional atomic propositions. This should be addressed or at least shortly discussed (e.g. we can say that it is future work to delve more deeply into). E.g.: in the example with the light switch in the doorway, the initial action model could instead be $<l, \{p := \top \}>$ \ $<r, \{q:= \top\}>$ and then the initial state could be $l <----> r$. Executing the action would then tell the agent whether $l$ or $r$ is true, hence learning the action.
  \item VERY IMPORTANT: Note that a lot of work on a *quantitive* approach to learning actions already exists. But not for learning DEL action models of course. Nevertheless, we should find some relevant references to cite.
  \item If we never end up saying anything about the case of partial observability, maybe we should only consider epistemic models and action models with empty/trivial accessibility relations?
  \item Think about structure. For instance, we could postpone *everything* about action libraries until after having dealt completely with learning of individual actions. This would mean that we wouldn't even define action libraries or equivalence of action libraries before we go to action library learning after being completely done with learning of individual actions.
\end{itemize}
}

\bibliographystyle{splncs03.bst}

\providecommand{\noopsort}[1]{}


\end{document}